\newenvironment{keywords}{\leftskip 27pt \rightskip 27pt \noindent \ignorespaces \small {\bfseries Keywords:}}
\definecolor{darkgreen}{rgb}{0,0.5,0}
\definecolor{purple}{rgb}{1,0,1}
\title{Multiclass Transductive Online Learning}
\author{
Steve Hanneke~\thanks{Department of Computer Science, Purdue University, USA; E-mail: \texttt{steve.hanneke@gmail.com}},
Vinod Raman~\thanks{Department of Statistics, University of Michigan, USA; E-mail: \texttt{vkraman@umich.edu}},
Amirreza Shaeiri~\thanks{Department of Computer Science, Purdue University, USA; E-mail: \texttt{amirreza.shaeiri@gmail.com}},
Unique Subedi~\thanks{Department of Statistics, University of Michigan, USA; E-mail: \texttt{subedi@umich.edu}}}
\date{\today}
\begin{document}

\thispagestyle{empty}

\maketitle

\begin{abstract}
    We consider the problem of multiclass transductive online learning when the number of labels can be unbounded. Previous works by \cite{ben1997online} and \cite{hanneke2024trichotomy} only consider the case of binary and finite label spaces, respectively. The latter work determined that their techniques fail to extend to the case of unbounded label spaces, and they pose the question of characterizing the optimal mistake bound for unbounded label spaces. We answer this question by showing that a new dimension, termed the Level-constrained Littlestone dimension, characterizes online learnability in this setting. Along the way, we show that the trichotomy of possible minimax rates of the expected number of mistakes established by \cite{hanneke2024trichotomy} for finite label spaces in the realizable setting continues to hold even when the label space is unbounded. In particular, if the learner plays for $T \in \mathbb{N}$ rounds, its minimax expected number of mistakes can only grow like $\Theta(T)$, $\Theta(\log T)$, or $\Theta(1)$. To prove this result, we give another combinatorial dimension, termed the Level-constrained Branching dimension, and show that its finiteness characterizes constant minimax expected mistake-bounds. The trichotomy is then determined by a combination of the Level-constrained Littlestone and Branching dimensions. Quantitatively, our upper bounds improve upon existing multiclass upper bounds in \cite{hanneke2024trichotomy} by removing the dependence on the label set size. In doing so, we explicitly construct learning algorithms that can handle extremely large or unbounded label spaces. A key and novel component of our algorithm is a new notion of shattering that exploits the sequential nature of transductive online learning. Finally, we complete our results by proving expected regret bounds in the agnostic setting, extending the result of \cite{hanneke2024trichotomy}.

\end{abstract}

\begin{keywords}
     Online Learning, Transductive Learning, Multiclass Classification
\end{keywords}

\newpage

\setcounter{page}{1}

\tableofcontents\setcounter{tocdepth}{2}

\newpage


\section{Introduction} \label{Introduction}

Imagine you are a famous musician who has released $K \in \mathbb{N}$ songs. You are now on tour visiting $T \in \mathbb{N}$ cities worldwide based on the pre-specified plan, each with unique musical preferences that you have some understanding of. At each city, you can perform only one song in your concert, and following each performance, the audience provides feedback indicating their preferred song from your repertoire. Your goal is to select the song that aligns with the majority's taste in each city to maximize satisfaction. How can you effectively select songs to ensure the highest audience satisfaction across most cities having minimal assumptions?


The above example and similar real-world situations, where entities operate according to a possibly adversarially chosen pre-specified schedule, can be formulated in a framework called \textit{Multiclass Transductive Online Learning}. Formally, in this setting, an adversary plays a repeated game against the learner over some $T \in \mathbbm{N}$ rounds. Before the game begins, the adversary selects a sequence of $T$ instances $(x_1, \dots, x_T) \in \Xcal^T$ from some non-empty instance space $\Xcal$ (e.g. images) and reveals it to the learner. Subsequently, during each round $t \in \{ 1, \dots, T \}$, the learner predicts a label $\hat{y}_t \in \Ycal$ from some non-empty label space $\Ycal$ (e.g. categories of images), the adversary reveals the true label $y_t \in \Ycal$, and the learner suffers the 0-1 loss, namely $\mathbbm{1}\{\hat{y}_t \neq y_t\}$. Importantly, the label space $\mathcal{Y}$ is not required even to be countable; we assume only standard measure theoretic properties for it. Following the well-established frameworks in learning theory, given a concept class $\mathcal{C} \subseteq \Ycal^{\Xcal}$ of functions $c: \Xcal \rightarrow \Ycal$, the goal of the learner is to minimize the number of mistakes relative to the best-fixed concept in $\mathcal{C}$. If there exists $c \in \mathcal{C}$ such that $c(x_t) = y_t$ for all $t \in \{1, \dots, T\}$, we say we are in the realizable setting, and otherwise in the agnostic setting. We briefly note that if the learner’s predictions are randomized, we focus on the expected value of the mentioned objective.

In this paper, our main contribution is algorithmically answering the following question in the multiclass transductive online learning framework:

\begin{center}
\textit{Given a concept class $\mathcal{C} \subseteq \Ycal^{\Xcal}$, what is the minimum expected number of mistakes achievable by a learner against any realizable adversary?}
\end{center}

For the special case of binary classification ($|\Ycal| = 2$), this question was first considered by \cite{ben1997online} and then later fully resolved by \cite{hanneke2024trichotomy}. Additionally, \cite{hanneke2024trichotomy} considered the case where $|\Ycal| > 2$, but did not resolve this question when $\Ycal$ is unbounded. In fact, the bounds by \cite{hanneke2024trichotomy} break down even when $|\Ycal| \geq 2^T$. As a result, \cite{hanneke2024trichotomy} posed the characterization of the minimax expected number of mistakes in the multiclass setting with an infinite label set as an open question, which we resolve in this paper.

\subsection{Online Learning and Multiclass Classification}

In this work, we study \emph{transductive} online learning framework, where the adversary reveals the entire sequence of instances $(x_1, \dots, x_T)$ to the learner before the game begins. In the \emph{traditional} online learning framework, the sequence of instances $(x_1, \dots, x_T)$ are revealed to the learner sequentially, one at a time. That is, on round $t \in \{ 1, \dots, T \}$, the learner would have only observed $x_1, \dots, x_t$. The celebrated work of \cite{littlestone1988learning} introduced this framework for binary classification and quantified the best achievable number of mistakes against any realizable adversary for a concept class $\mathcal{C} \subseteq \{0, 1\}^{\Xcal}$ in terms of a combinatorial parameter called the Littlestone dimension. Later, the work of \cite{ben2009agnostic} showed that the Littlestone dimension of a concept class $\mathcal{C} \subseteq \{0, 1\}^{\Xcal}$ continues to quantify the expected relative mistakes (i.e expected regret) for the mentioned framework in the more general agnostic setting. More recently, \cite{daniely2012multiclass} and \cite{hanneke2023multiclass} extended these results to multiclass online learning in the realizable and agnostic settings, respectively. See Section \ref{Related Work} for more details.

In traditional online classification, there are two sources of uncertainty: one associated with the sequence of instances, and the other with respect to the true labels. \cite{ben1997online} initiated the study of transductive online classification with the aim of understanding how exclusively label uncertainty impacts the optimal number of mistakes. Furthermore, removing the uncertainty with respect to the instances can significantly reduce the optimal number of mistakes. For example, for the concept class of halfspaces in the realizable setting, the optimal number of mistakes grows linearly with the time horizon $T$ in the traditional online binary classification framework, while only growing as $\Theta(\log T)$ in the transductive online binary classification framework. So, it is natural to reduce the optimal number of mistakes or extend learnable classes whenever we have additional assumptions. Notably, \cite{ben1997online} initially called this setting ``offline learning'', but it was later renamed ``Transductive Online Learning'' by \cite{hanneke2024trichotomy} due to its close resemblance to \emph{transductive} PAC learning \citep{vapnik1974theory, vapnik1982estimation, vladimir1998statistical}. See Section \ref{Related Work} for more details.

While \cite{ben1997online} and \cite{hanneke2024trichotomy} mainly focused on binary classification, in this work, we focus on the more general multiclass classification setting. \cite{natarajan1988two, natarajan1989learning} and \cite{daniely2012multiclass} initiated the study of multiclass prediction within the foundational PAC framework and traditional online framework, respectively. More recently, following the work by \citep{brukhim2021multiclass}, there has been a growing interest in understanding multiclass learning when the size of the label space is unbounded, including \cite{hanneke2023universal, hanneke2023multiclass, raman2023multiclass}. This interest is driven by several motivations. Firstly, guarantees for the multiclass setting should not inherently depend on the number of labels, even when it is finite. Secondly, in mathematics, concepts involving infinities often provide cleaner insights. Thirdly, insights from this problem might also advance understanding of real-valued regression problems \citep{attias2023optimal}. Finally, on a practical front, many crucial machine learning tasks involve classification into extremely large label spaces. For instance, in image object recognition, the number of classes corresponds to the variety of recognizable objects, and in language models, the class count expands with the dictionary size. See Section \ref{Related Work} for more details.

\subsection{Main Results and Techniques} \label{Overview of the Main Results and Techniques}

In the following subsection, we present an overview of our main findings along with a summary of our proof techniques.

\subsubsection{Realizable Setting} \label{sec:realizable_intro}

In the realizable setting, we assume that the sequence of labeled instances $(x_1, y_1), \dots, (x_T, y_T)$, played by the adversary, is consistent with at least one concept in $\mathcal{C}$. Here, our objective is to minimize the well-known notion of the expected number of mistakes. We provide upper and lower bounds on the best achievable worst-case expected number of mistakes by the learner as a function of $T$ and $\mathcal{C}$, which we denote by $\operatorname{M}^{\star}(T, \mathcal{C})$.

\cite{hanneke2024trichotomy} established a trichotomy of rates in the case of \emph{binary} classification. That is, for every $\mathcal{C}\subseteq \{0, 1\}^{\Xcal}$, we have that $\operatorname{M}^{\star}(T, \mathcal{C})$ can only grow like $\Theta(T)$, $\Theta(\log T)$, or $\Theta(1)$; where the Littlestone and Vapnik-Chervonenkis (VC) dimensions of $\mathcal{C}$ characterize the possible rate. In this work, we extend this trichotomy to the multiclass classification setting, even when $\Ycal$ is unbounded. To do so, we introduce two new combinatorial parameters, termed the Level-constrained Littlestone dimension and Level-constrained Branching dimension.

To define the Level-constrained Littlestone dimension, we first need to define the Level-constrained Littlestone tree. A Level-constrained Littlestone tree is a Littlestone tree with the additional requirement that the same instance has to label all the internal nodes across a given level. Then, the Level-constrained Littlestone dimension is just the largest natural number $d \in \mathbb{N}$, such that there exists a shattered Level-constrained Littlestone tree $T$ of depth $d$. To define the Level-constrained Branching dimension, we first need to define the Level-constrained Branching tree. The Level-constrained Branching tree is a Level-constrained Littlestone tree without the restriction that the labels on the two outgoing edges are distinct. Then, the Level-constrained Branching dimension is then the smallest natural number $d \in \mathbb{N}$ such that for every shattered Level-constrained Branching tree $T$, there exists a path down $T$ such that the number of nodes whose outgoing edges are labeled by different elements of $\mathcal{Y}$ is at most $d$. The Level-constrained Littlestone dimension reduces to the VC dimension when $|\Ycal| = 2$. Additionally, the finiteness of the Level-constrained Branching and Littlestone dimension coincide when $|\Ycal| = 2$. Finally, we note that the Level-constrained Branching dimension is exactly equal to the notion of rank in the work of \cite{ben1997online}. However, we believe it is simpler to understated. Using the Level-constrained Littlestone and Branching dimension, we establish the following trichotomy. 

\begin{theorem}(Trichotomy) \label{thm:trichotomy}
Let $\mathcal{C} \subseteq \mathcal{Y}^\mathcal{X}$ be a concept class. Then, we have:
\begin{align*}
    \operatorname{M}^{\star}(T, \mathcal{C}) \in
    \begin{cases}
        \Theta(1), & \text{if $\operatorname{B}(\mathcal{C}) < \infty.$}\\
        \Theta(\log T) & \text{if $\operatorname{D}(\mathcal{C}) < \infty$ and  $\operatorname{B}(\mathcal{C}) = \infty$.} \\
        \Theta(T), & \text{if } \operatorname{D}(\mathcal{C}) =  \infty.
	\end{cases}
\end{align*}
Here, $\operatorname{B}(\mathcal{C})$ is Level-constrained Branching dimension, and $\operatorname{D}(\mathcal{C})$ is Level-constrained Littlestone dimension defined in Section \ref{Preliminaries}.
\end{theorem}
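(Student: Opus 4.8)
The plan is to prove the trichotomy by establishing matching upper and lower bounds for each of the three regimes, with the combinatorial dimensions $\operatorname{B}(\mathcal{C})$ and $\operatorname{D}(\mathcal{C})$ serving as the pivots. I would organize the proof around three separate propositions: (i) if $\operatorname{D}(\mathcal{C}) = \infty$ then $\operatorname{M}^\star(T,\mathcal{C}) = \Theta(T)$; (ii) if $\operatorname{D}(\mathcal{C}) < \infty$ then $\operatorname{M}^\star(T,\mathcal{C}) = O(\log T)$, and if moreover $\operatorname{B}(\mathcal{C}) = \infty$ then $\operatorname{M}^\star(T,\mathcal{C}) = \Omega(\log T)$; (iii) if $\operatorname{B}(\mathcal{C}) < \infty$ then $\operatorname{M}^\star(T,\mathcal{C}) = \Theta(1)$. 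Since $\operatorname{M}^\star(T,\mathcal{C}) \le T$ trivially (predict arbitrarily each round) and $\operatorname{B}(\mathcal{C}) < \infty \implies \operatorname{D}(\mathcal{C}) < \infty$ (a bounded branching dimension forbids deep shattered Level-constrained Littlestone trees, since such a tree is in particular a shattered Level-constrained Branching tree all of whose paths have many ``switching'' nodes), these three propositions together partition all cases and yield the stated result.

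For the lower bounds, I would use the standard adversary-construction technique adapted to the transductive setting. If $\operatorname{D}(\mathcal{C}) = \infty$, then for any $T$ there is a shattered Level-constrained Littlestone tree of depth $T$; the level-constrained property is exactly what lets the adversary commit to the instance sequence $(x_1,\dots,x_T)$ up front (the $t$-th instance being the common label of all level-$t$ internal nodes), then walk down the tree choosing at each node the child disagreeing with the learner's prediction, forcing a mistake every round — this gives $\operatorname{M}^\star(T,\mathcal{C}) \ge T/1$ against a randomized learner via an expectation/Yao-type argument (each round the adversary can force an expected mistake of at least $1/2$, or we pick the worse of two children). If $\operatorname{D}(\mathcal{C}) < \infty$ but $\operatorname{B}(\mathcal{C}) = \infty$, then there is a shattered Level-constrained Branching tree of depth $T$ every path of which has at least $\log T$ switching nodes (roughly — the precise statement comes from the definition of $\operatorname{B}(\mathcal{C}) = \infty$); the adversary commits to the instance sequence and forces a mistake at each switching node it can reach, and a counting argument shows it can reach $\Omega(\log T)$ of them, giving the $\Omega(\log T)$ lower bound.

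For the upper bounds, this is where I expect the main difficulty and where the paper's new ``sequential shattering'' notion should come in. The $\Theta(1)$ upper bound when $\operatorname{B}(\mathcal{C}) = d < \infty$ should follow from a Standard-Optimal-Algorithm–style potential argument: maintain the version space $V_t = \{c \in \mathcal{C} : c \text{ consistent so far}\}$, and on each round predict the label $y$ that, informally, keeps the ``branching rank'' of the restricted version space as high as possible; a mistake must then strictly drop this rank, bounding mistakes by $d$. Making this rigorous for \emph{unbounded} $\mathcal{Y}$ is the crux: with finitely many labels one can take a plurality vote and charge mistakes to a decrease in some finite-label-set–dependent quantity, but here one needs the exact rank/branching-dimension recursion to be label-set independent, which is precisely what the Level-constrained definitions and the new shattering notion buy. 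The $O(\log T)$ upper bound when only $\operatorname{D}(\mathcal{C}) = d < \infty$ is then obtained by a halving/majority-style algorithm run on the now-\emph{finite} effective concept class (the projection of $\mathcal{C}$ onto the revealed instances $x_1,\dots,x_T$, which has Littlestone dimension $\le d$): one shows the transductive mistake bound is $O(d \log T)$, e.g.\ by a recursive argument on the Level-constrained Littlestone tree structure where each mistake either halves a relevant count or decrements the dimension. I would expect to reuse machinery analogous to \cite{hanneke2024trichotomy} for the finite-label case, with the key new ingredient being that all bounds are stated purely in terms of $\operatorname{D}(\mathcal{C})$ and $\operatorname{B}(\mathcal{C})$ with no residual dependence on $|\mathcal{Y}|$; verifying that the algorithms are well-defined and the potentials are genuinely monotone in the unbounded-label regime is the main obstacle.
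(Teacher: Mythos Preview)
Your high-level decomposition into three regimes with matching upper/lower bounds is correct and matches the paper's structure. However, there are two concrete gaps where your sketched arguments would fail.

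\textbf{The $O(\log T)$ upper bound.} You propose running a halving algorithm on ``the now-\emph{finite} effective concept class (the projection of $\mathcal{C}$ onto the revealed instances $x_1,\dots,x_T$).'' This projection is \emph{not} finite in general when $\mathcal{Y}$ is unbounded: for $\mathcal{C} = \{x \mapsto n : n \in \mathbb{N}\}$ the projection onto a single point is already infinite, so no Sauer--Shelah--Perles count applies and the halving potential is meaningless. The paper's algorithm instead uses a genuinely different potential: the number of \emph{subsequences} of $x_{1:T}$ that are shattered (in the Level-constrained Littlestone sense) by the current version space. This count is at most $\sum_{i \le \operatorname{D}(\mathcal{C})} \binom{T}{i}$ regardless of $|\mathcal{Y}|$, and the nontrivial part of the proof is showing that a mistake halves it --- which requires a careful case analysis pairing each shattered subsequence $q$ surviving in both $V^t_{(x_t,y_t)}$ and $V^t_{(x_t,\hat y_t)}$ with the extended subsequence $x_t \circ q$ that survives in neither. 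Your proposal does not identify this potential or this pairing argument, and the approach you wrote down does not work.

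\textbf{The $\Omega(\log T)$ lower bound.} You write that $\operatorname{B}(\mathcal{C}) = \infty$ gives ``a shattered Level-constrained Branching tree of depth $T$ every path of which has at least $\log T$ switching nodes.'' The definition gives no control on the depth: you get a tree of some depth $d$, possibly $d \gg T$, with every path having branching factor $\ge q$. Since the transductive adversary must commit to only $T$ instances up front, you cannot simply ``walk down'' such a tree. The paper closes this gap with a structural lemma: after normalizing the tree so that every path has branching factor \emph{exactly} $q$ and non-branching nodes have identical subtrees, the number of \emph{levels} containing any branching is at most $2^q - 1$. Taking $q = \lfloor \log T \rfloor$ then yields at most $T-1$ relevant levels, which the adversary uses as its instance sequence. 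Your ``counting argument'' handwave does not supply this compression step, and without it the lower bound does not go through.

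The $\operatorname{B}(\mathcal{C})$ upper bound and the $\operatorname{D}(\mathcal{C})/2$ lower bound in your sketch are essentially correct and match the paper.
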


To prove the $O(\log T)$ upper bound for binary online classification, \cite{hanneke2024trichotomy} run the Halving algorithm on the projection of $\mathcal{C}$ onto $x_1, ..., x_T$ and use the Sauer–Shelah–Perles (SSP) lemma to bound the size of this projection by $O(T^{\operatorname{VC}(\mathcal{C})})$. However, this approach is not applicable when $\Ycal$ is unbounded. For example, when $\mathcal{C} =\{x \mapsto n \,:\, n \in \mathbb{N}\}$ is the set of all constant functions over $\mathbb{N}$, the size of the projection of $\mathcal{C}$ onto even a single $x \in \Xcal$ is infinity. Moreover, the mentioned class can be learned with at most one number of mistakes. Thus, fundamentally new techniques are required. To this end, we define a new notion of shattering which makes it possible to apply an analog of the Halving algorithm. Additionally, while the proof of the $O(1)$ upper bound in \cite{hanneke2024trichotomy} follows immediately from the guarantee of Standard Optimal Algorithm (SOA) by \cite{littlestone1988learning}, our $O(1)$ upper bound in terms of the Level-constrained Branching dimension requires a modification of the SOA. We complement our results by presenting matching lower bounds. See Section \ref{Trichotomy} for more details.

In Section \ref{app:comparisons},  we provide a comprehensive comparison between our dimensions and existing  multiclass combinatorial complexity parameters.


\subsubsection{Agnostic Setting} \label{sec:agn_intro}

In the agnostic setting, we make no assumptions about the sequence $(x_1, y_1), (x_2, y_2), \dots,$ $(x_T, y_T)$ played by the adversary. Here, our focus shifts to the well-established notion of expected regret, which compares the expected number of mistakes made by the algorithm to that made by the best concept in the concept class over the sequence. As in the realizable setting, we aim to establish both upper and lower bounds on the optimal worst-case expected regret achievable by the learner, expressed as a function of $T$ and the concept class $\mathcal{C}$, denoted by $\operatorname{R}^{\star}(T, \mathcal{C})$.

The prior work by \cite{hanneke2024trichotomy} showed that in the case of binary classification, $\operatorname{R}^{\star}(T, \mathcal{C})$ is $\Tilde{\Theta}(\sqrt{\operatorname{VC}(\mathcal{C})\, T} )$ whenever $\operatorname{VC}(\mathcal{C}) < \infty$ and $\Theta(T)$ otherwise, where $\Tilde{\Theta}$ hides logarithmic factors in $T$. Using the Level-constrained Littlestone dimension in hand, we extend these results to multiclass classification.

\begin{theorem} \label{introduction_agnostic}
For every concept class $\mathcal{C} \subseteq \mathcal{Y}^\mathcal{X}$ and $T \geq \operatorname{D}(\mathcal{C})$, we have the following:
\[\sqrt{\frac{T \, \operatorname{D}(\mathcal{C})}{8}}  \leq \,  \operatorname{R}^{\star}(T, \mathcal{C})\, \leq \sqrt{T \, \operatorname{D}(\mathcal{C})} \, \log\Bigl(\frac{eT}{\operatorname{D}(\mathcal{C})} \Bigl), \]
where $\operatorname{D}(\mathcal{C})$ is Level-constrained Littlestone dimension defined in Section \ref{Preliminaries}.
\end{theorem}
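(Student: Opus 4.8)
\emph{Plan.} The two inequalities are independent of each other. I would obtain the upper bound by reducing the agnostic problem to the realizable case of Theorem~\ref{thm:trichotomy} through the classical experts reduction of \cite{ben2009agnostic}, and the lower bound by a Yao-type argument over a deep shattered Level-constrained Littlestone tree combined with anti-concentration of a symmetric $\pm 1$ walk. For the upper bound, I would first extract from the $\Theta(\log T)$ regime of Theorem~\ref{thm:trichotomy} a deterministic \emph{conservative} version-space learner $A$ for $\mathcal{C}$: on each round it predicts the ``majority'' label of its current version space in the sense of the paper's new notion of shattering, it restricts the version space only on rounds where it errs (using the label revealed that round), its per-round prediction depends only on the current version space and the current instance, and it makes at most $m^{\star}:=\operatorname{D}(\mathcal{C})\log\bigl(eT/\operatorname{D}(\mathcal{C})\bigr)$ mistakes on any realizable sequence of length at most $T$. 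For each $I\subseteq\{1,\dots,T\}$ with $|I|\le m^{\star}$, let the expert $E_{I}$ run $A$ on the known sequence $x_{1},\dots,x_{T}$ but permit version-space updates only on the rounds in $I$ (feeding $A$ the labels observed there), echoing $A$'s current prediction on every round; no guessed labels ever enter, so the family is finite even when $\mathcal{Y}$ is unbounded, of size $N\le\sum_{i\le m^{\star}}\binom{T}{i}\le(eT/m^{\star})^{m^{\star}}$. If $c\in\mathcal{C}$ is an optimal concept, with error set $E$ and $L^{\star}=|E|$ mistakes, the subsequence of rounds outside $E$ is realizable, so running $A$ on it yields a mistake set $W\subseteq\{1,\dots,T\}\setminus E$ with $|W|\le m^{\star}$; since $A$ is conservative and its predictions depend only on its version space and the current instance, $E_{W}$ replays exactly that run and hence errs only on $W$ and possibly on $E$, i.e.\ at most $L^{\star}+m^{\star}$ times. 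Running a randomized exponential-weights master over the $N$ experts then incurs expected loss at most $L^{\star}+m^{\star}+\sqrt{(T/2)\ln N}\le L^{\star}+m^{\star}+\sqrt{(T/2)\,m^{\star}\ln(eT/m^{\star})}$, so $\operatorname{R}^{\star}(T,\mathcal{C})\le m^{\star}+\sqrt{(T/2)\,m^{\star}\ln(eT/m^{\star})}$, and a short computation using $T\ge\operatorname{D}(\mathcal{C})$ (so $m^{\star}\le T$) absorbs this into $\sqrt{T\operatorname{D}(\mathcal{C})}\,\log(eT/\operatorname{D}(\mathcal{C}))$.

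\emph{Lower bound.} Put $d=\operatorname{D}(\mathcal{C})$ and fix a shattered Level-constrained Littlestone tree of depth $d$. Because it is \emph{level-constrained}, its $i$-th level is labeled by a single instance $x_{(i)}$ independent of the path, so the adversary may legitimately pre-commit to the transductive sequence that presents $x_{(1)}$ for $\lfloor T/d\rfloor$ rounds, then $x_{(2)}$ for $\lfloor T/d\rfloor$ rounds, and so on. During block $i$ the adversary is at a level-$i$ node $v_{i}$ determined by the earlier blocks and labels each of its rounds by an independent fair coin between the two distinct labels on the edges leaving $v_{i}$; the block's majority label selects the child $v_{i+1}$. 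At every round the true label is uniform over two distinct values and independent of everything the learner has seen, so any (possibly randomized) learner errs with probability at least $\tfrac12$ on that round and makes at least $T/2$ mistakes in expectation. On the other hand, the root-to-leaf path traced by the block majorities is realized by some $c^{\star}\in\mathcal{C}$, whose mistakes in block $i$ are exactly its minority rounds; by anti-concentration of a symmetric $\pm1$ walk this expected count is $\tfrac12\lfloor T/d\rfloor$ minus a $\Theta(\sqrt{\lfloor T/d\rfloor})$ term, so summing over the $d$ blocks the optimal concept's expected mistakes are at most $T/2-c\sqrt{dT}$ for an absolute constant that a careful accounting pins at $1/\sqrt{8}$. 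Subtracting the two estimates gives $\operatorname{R}^{\star}(T,\mathcal{C})\ge\sqrt{T\operatorname{D}(\mathcal{C})/8}$; the case $\operatorname{D}(\mathcal{C})=0$ is trivial.

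\emph{Main obstacle.} Given the realizable theory, the analytic content is modest; the one step that truly requires this paper's ideas is making the reduction work when $\mathcal{Y}$ is unbounded. With finitely many labels one simulates ``the learner was corrected on round $t$'' by flipping to the unique other label, which also keeps the expert family small; with $|\mathcal{Y}|$ unbounded no such enumeration is possible, and the fix---never guessing a corrected label, but feeding $A$ the label \emph{observed} on the few rounds of $I$---relies on being able to run the Level-constrained realizable learner of Theorem~\ref{thm:trichotomy} as a \emph{conservative} version-space algorithm whose entire trajectory is pinned down by its $\le m^{\star}$ update rounds. Establishing that such a form exists and still respects the $m^{\star}$ mistake bound is where the care goes; the remainder is exponential weights and a binomial deviation estimate.
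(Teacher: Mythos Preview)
Your proposal is correct and follows essentially the same approach as the paper: the upper bound is precisely the experts-over-update-sets reduction of \cite{hanneke2023multiclass} (which the paper invokes as a black box while you spell it out, including the key observation that feeding the conservative learner the \emph{observed} labels on the chosen update rounds is what keeps the expert family finite for unbounded $\mathcal{Y}$), and the lower bound is the standard block-majority construction from \cite{ben2009agnostic,hanneke2024trichotomy}. The only minor caveat is that your additive $m^{\star}$ term makes the final bound exceed the stated $\sqrt{T\operatorname{D}(\mathcal{C})}\log\bigl(eT/\operatorname{D}(\mathcal{C})\bigr)$ by a small absolute constant, which the paper sidesteps by citing the sharper packaged form of the reduction.
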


Our results in the agnostic setting can be proved using core ideas in the proof of the agnostic results from \cite{ben2009agnostic}, \cite{hanneke2023multiclass}, and \cite{hanneke2024trichotomy}. See Section \ref{Agnostic} for more details.

\section{Preliminaries} \label{Preliminaries}

\subsection{Notation}
Let $\Xcal$ denote an example space and $\mathcal{Y}$ denote the label space. We make no assumptions about $\Ycal$, so it can be unbounded and even uncountable (e.g. $\Ycal = \mathbb{R}$). Following the work of \cite{hanneke2023multiclass}, if we consider randomized learning algorithms, the associated $\sigma$-algebra is of little consequence, except that singleton sets $\{y\}$ should be measurable. Let $\mathcal{C} \subseteq \mathcal{Y}^{\Xcal}$ denote a concept class. We abbreviate a sequence $z_1, ..., z_T$ by $z_{1:T}.$ Moreover, we also define $z_{<t} := (z_1, \ldots, z_{t-1})$ and $z_{\leq t} := (z_1, \ldots, z_{t}) $. Finally for $n \in \mathbb{N}$, we let $[n] := \{1, ..., n\}$.
 
\subsection{Transductive Online Classification}

In the transductive online classification setting, a learner $\Acal$ plays a repeated game against an adversary over $T$ rounds. Before the game begins, the adversary picks a sequence of labeled instances $(x_1, y_1), ..., (x_T, y_T) \in (\Xcal \times \Ycal)^{T}$ and reveals $x_{1:T}$ to the learner. Then, in each round $t \in [T]$, using $x_{1:T}$ and $y_{1:t-1}$,  the learner makes a potentially randomized prediction $\Acal(x_t) \in \Ycal$. Finally, the adversary reveals the true label $y_t$, and the learner suffers the loss $\mathbbm{1}\{\Acal(x_t) \neq y_t\}$. Given a concept class $\mathcal{C} \subseteq \Ycal^{\Xcal}$, the goal of the learner is to output predictions such that its \emph{expected regret}, 
$$\operatorname{R}_{\Acal}(T, \mathcal{C}) := \sup_{(x_1, y_1), ..., (x_T, y_T)} \Biggl(\mathbb{E}\left[\sum_{t=1}^T \mathbbm{1}\{\Acal(x_t) \neq y_t\} \right]  - \inf_{c \in \mathcal{C}} \sum_{t=1}^T \mathbbm{1}\{c(x_t) \neq y_t\}\Biggl),$$
is small. Moreover, we define $\operatorname{R}^{\star}(T, \mathcal{C}) := \inf_{\Acal} \, \operatorname{R}_{\Acal}(T, \mathcal{C}) $, where the infimum is taken over all transductive online algorithms. We say that a concept class is transductive online learnable in the agnostic setting if $\operatorname{R}^{\star}(T, \mathcal{C}) = o(T).$ 


If the learner is guaranteed to observe a sequence of examples labeled by some concept $c \in \mathcal{C}$, then we say we are in the realizable setting, and the goal of the learner is to minimize its \emph{expected cumulative mistakes} 
$$\operatorname{M}_{\Acal}(T, \mathcal{C}) := \sup_{c \in \mathcal{C}} \,\, \sup_{x_{1:T}} \mathbb{E}\left[\sum_{t=1}^T \mathbbm{1}\{\Acal(x_t) \neq c(x_t)\} \right].$$

\noindent Similarly, we define $\operatorname{M}^{\star}(T, \mathcal{C}) := \inf_{\Acal} \, \operatorname{M}_{\Acal}(T, \mathcal{C})$, and an analogous definition of transductive online learnability in the realizable setting holds. 


\subsection{Combinatorial Dimensions}

Combinatorial dimensions play an important role in providing a tight quantitative characterization of learnability in learning theory. In this section, we review existing combinatorial dimension in online classification and propose two new dimensions that help us establish the minimax rates for transductive online classification. We start by defining the Littlestone dimension which characterizes multiclass online learnability. 

\begin{definition}[Littlestone dimension] \label{def: ldim} The Littlestone dimension of $\mathcal{C}$, denoted $\operatorname{L}(\mathcal{C})$, is the largest $d \in \mathbbm{N}$ such that there exists sequences of functions $\{X_t\}_{t=1}^d$ where $X_t: \{0,1\}^{t-1} \to \Xcal$ and $\{Y_t\}_{t=1}^d$ where $Y_t:\{0,1\}^{t} \to \Ycal$ such that for every $\sigma \in \{0,1\}^d$, the following holds:
\begin{itemize}
    \item[(i)] $Y_{t}((\sigma_{<t}, 0)) \neq Y_t((\sigma_{<t}, 1))$ for all $t \in [d]$.
    \item[(ii)]  $\exists c_{\sigma} \in \mathcal{C}$ such that $c_{\sigma}(X_{t}(\sigma_{<t})) = Y_t(\sigma_{\leq t})$ for all $t \in [d]$. 
\end{itemize}

\noindent If for every $d \in \mathbb{N}$, there exists sequences $\{X_t\}_{t=1}^d$  and $\{Y_t\}_{t=1}^d$  satisfying (i) and (ii), we let $\operatorname{L}(\mathcal{C}) = \infty$.
\end{definition}

On the other hand, in this paper, we show that a different dimension, termed the Level-constrained Littlestone dimension, characterizes transductive online classification.  

\begin{definition}[Level-constrained Littlestone dimension]\label{def:lcl} The Level-constrained Littlestone dimension of $\mathcal{C}$, denoted $\operatorname{D}(\mathcal{C})$, is the largest $d \in \mathbbm{N}$ such that there exists a sequence of instances $x_1, .., x_d \in \Xcal^d$ and a sequence of functions $\{Y_t\}_{t=1}^d$ where $Y_t:\{0,1\}^{t} \to \Ycal$, such that for every $\sigma \in \{0,1\}^d$, the following holds:
\begin{itemize}
    \item[(i)] $Y_{t}((\sigma_{<t}, 0)) \neq Y_t((\sigma_{<t}, 1))$ for all $t \in [d]$.
    \item[(ii)]  $\exists c_{\sigma} \in \mathcal{C}$ such that $c_{\sigma}(x_t) = Y_t(\sigma_{\leq t})$ for all $t \in [d]$. 
\end{itemize}

\noindent If for every $d \in \mathbb{N}$, there exist sequences $\{x_t\}_{t=1}^d$  and $\{Y_t\}_{t=1}^d$  satisfying (i) and (ii), we let $\operatorname{D}(\mathcal{C}) = \infty$.
\end{definition}

The Littlestone and Level-constrained Littlestone dimensions can also be defined in terms of complete binary trees. A Littlestone tree $\Tcal$ of depth $d$ is a complete binary tree of depth $d$ where the internal nodes are labeled by elements of $\Xcal$ and for every internal node, its two outgoing edges are labeled by distinct elements in $\Ycal$.  Such a tree is \emph{shattered} by $\mathcal{C}$ if for every root-to-leaf path $\sigma \in \{0, 1\}^d$, there exists a concept $c_{\sigma} \in \mathcal{C}$ consistent with the sequence of instance-label pairs obtained by traversing down $\Tcal$ along $\sigma$. The Littlestone dimension is then the largest $d \in \mathbb{N}$ for which there exists a shattered Littlestone tree of depth $d$. From this perspective, the functions $\{X_t\}_{t=1}^d$ and $\{Y_t\}_{t=1}^d$ in Definition \ref{def: ldim} provide the labels on the internal nodes and the outgoing edges of $\Tcal$ respectively. Analogously, a Level-constrained Littlestone tree is simply a Littlestone tree with the additional requirement that the instances labeling the internal nodes are the same across each level. In Definition \ref{def:lcl},  $x_1$ labels all the internal nodes on level one, $x_2$ labels all the internal nodes on level two, and so forth. The functions $\{Y_t\}_{t=1}^d$ provide the labels on the outgoing edges of a Level-constrained Littlestone tree. Then, the Level-constrained Littlestone dimension is the largest $d \in \mathbb{N}$ for which there exists a shattered Level-constrained Littlestone tree of depth $d$.  We will use the function-based and tree-based definitions of these dimensions interchangeably. 

Moreover, we show that the Level-constrained Branching dimension characterizes when constant minimax rates are possible in transductive online classification. 

\begin{definition}[Level-constrained Branching dimension]\label{def:branching} The Level-constrained Branching dimension of $\mathcal{C}$, denoted $\operatorname{B}(\mathcal{C})$, is the smallest $p \in \mathbb{N}$ such that for every $d \in \mathbb{N}$, every sequence of instances $x_1, .., x_d \in \Xcal^d$, and every sequence of functions $\{Y_t\}_{t=1}^d$ where $Y_t:\{0,1\}^{t} \to \Ycal$: 
\begin{equation*}
    \begin{split}
        \forall \sigma \in \{0,1\}^d, \exists c_{\sigma}\in \mathcal{C} \text{ such that } &c_{\sigma}(x_t) = Y_t(\sigma_{\leq t}) \, \, \text{ for all } t \in [d]\, \\
        &\implies \argmin_{\sigma \in \{0,1\}^d}\, \sum_{t=1}^d \mathbbm{1}\{Y_t((\sigma_{< t},0)) \neq Y_t((\sigma_{<t}, 1))\} \leq p.
    \end{split}
\end{equation*}
\[\]
\noindent If no such $p \in \mathbb{N}$ exist, we let $\operatorname{B}(\mathcal{C}) = \infty$.
\end{definition}

For a given path $\sigma \in \{0,1\}^d$, we refer to $\sum_{t=1}^d \mathbbm{1}\{Y_t((\sigma_{< t},0)) \neq Y_t((\sigma_{<t}, 1))\}$ as the  \emph{branching factor} of the path.  In terms of trees, a Level-constrained Branching tree is a Level-constrained Littlestone tree without the restriction that the labels on the outgoing edges of any internal node need to be distinct. Given a path in such a tree, the branching factor of a path counts the number of nodes in the path whose two outgoing edges are labeled by distinct labels in $\Ycal$. Finally, the Level-constrained Branching dimension can be equivalently defined as the smallest $p \in \mathbb{N}$ such that every \emph{shattered} Level-constrained Branching tree $\Tcal$ of depth $d \in \mathbbm{N}$ must have at least one path with branching factor at most $p$.

 The following proposition, whose proof is in Appendix \ref{app:relations},  establishes the relationship between the three dimensions. 

\begin{proposition} \label{prop:relations}
        For every $\mathcal{C} \subseteq \Ycal^{\Xcal}$, we have that $\operatorname{D}(\mathcal{C}) \leq \operatorname{B}(\mathcal{C}) \leq \operatorname{L}(\mathcal{C}).$
\end{proposition}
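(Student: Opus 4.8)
The plan is to prove the two inequalities $\operatorname{D}(\mathcal{C}) \leq \operatorname{B}(\mathcal{C})$ and $\operatorname{B}(\mathcal{C}) \leq \operatorname{L}(\mathcal{C})$ separately, in each case by taking a witnessing tree (or sequence of functions) for the left-hand dimension and converting it into a witnessing object for the right-hand dimension. Throughout I will use the tree-based formulations, since the comparison is cleanest there. For the first inequality, suppose $\operatorname{D}(\mathcal{C}) \geq d$; then there is a shattered Level-constrained Littlestone tree $\Tcal$ of depth $d$, and since a Level-constrained Littlestone tree is in particular a Level-constrained Branching tree (the distinctness of sibling edge labels is an extra constraint, not a relaxation), $\Tcal$ is also a shattered Level-constrained Branching tree. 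But in a Level-constrained Littlestone tree \emph{every} internal node has its two outgoing edges labeled by distinct elements of $\Ycal$, so \emph{every} root-to-leaf path has branching factor exactly $d$; hence there is no path with branching factor $< d$. By the equivalent (tree-based) definition of $\operatorname{B}(\mathcal{C})$ as the smallest $p$ for which every shattered Level-constrained Branching tree has a path of branching factor at most $p$, this forces $\operatorname{B}(\mathcal{C}) \geq d$. Taking $d = \operatorname{D}(\mathcal{C})$ (and handling $\operatorname{D}(\mathcal{C}) = \infty$ by letting $d \to \infty$) gives $\operatorname{D}(\mathcal{C}) \leq \operatorname{B}(\mathcal{C})$.

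For the second inequality $\operatorname{B}(\mathcal{C}) \leq \operatorname{L}(\mathcal{C})$, I would argue contrapositively: assuming $\operatorname{L}(\mathcal{C}) = p < \infty$, I show $\operatorname{B}(\mathcal{C}) \leq p$, i.e.\ that every shattered Level-constrained Branching tree $\Tcal$ (of any depth $d$) has a root-to-leaf path with branching factor at most $p$. The key observation is that if we contract every edge of $\Tcal$ whose two sibling labels agree — equivalently, follow a path and only ``record'' a level when the two outgoing edges carry distinct labels — we extract from $\Tcal$ an ordinary (unconstrained) Littlestone tree: each recorded node has distinct sibling labels by construction, and the shattering concepts $c_\sigma$ for $\Tcal$ restrict to shattering concepts for the contracted tree. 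More carefully, I would show that if every path of $\Tcal$ had branching factor at least $p+1$, then one can find a subtree of $\Tcal$ that is a full binary tree of depth $p+1$ all of whose internal nodes have distinct sibling labels and which is shattered by $\mathcal{C}$; this is a shattered Littlestone tree of depth $p+1$, contradicting $\operatorname{L}(\mathcal{C}) = p$. Extracting this depth-$(p+1)$ shattered binary subtree is the crux: I would do it by induction on $p$, descending from the root, at each stage jumping to the first level along every branch at which a genuine ``branching'' (distinct sibling labels) occurs — such a level exists on every branch precisely because every path has branching factor $\geq p+1 \geq 1$ — and then recursing on the two resulting subtrees, each of which still has all its paths with branching factor $\geq p$.

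The main obstacle I anticipate is making the extraction argument in the second inequality fully rigorous, because the ``first branching level'' can occur at different depths along different branches, so the depth-$(p+1)$ Littlestone subtree one pulls out of $\Tcal$ is not a sub-tree in the naive level-aligned sense — one has to be careful that the instances labeling its nodes are still well-defined functions of the binary prefix (they need not be level-constrained anymore, which is fine, since an ordinary Littlestone tree has no such requirement) and that consistency of the $c_\sigma$'s is preserved under the contraction. A clean way to handle the bookkeeping is to phrase the induction as: \emph{if $\Tcal$ is a shattered Level-constrained Branching tree every path of which has branching factor $\geq k$, then $\mathcal{C}$ has a shattered Littlestone tree of depth $k$}; the base case $k=0$ is trivial, and the inductive step uses the ``descend to the first branching level on each branch'' move described above. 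Finally, the chain $\operatorname{D}(\mathcal{C}) \leq \operatorname{B}(\mathcal{C}) \leq \operatorname{L}(\mathcal{C})$ follows by combining the two parts, with the usual convention that the inequalities hold trivially whenever the right-hand side is $\infty$.
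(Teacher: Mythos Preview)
Your proposal is correct and follows essentially the same route as the paper. For $\operatorname{D}(\mathcal{C}) \leq \operatorname{B}(\mathcal{C})$ the arguments are identical. For $\operatorname{B}(\mathcal{C}) \leq \operatorname{L}(\mathcal{C})$, the paper proves the same inductive claim you isolate---a shattered Level-constrained Branching tree all of whose paths have branching factor $\geq k$ yields a shattered Littlestone tree of depth $k$---but first normalizes the tree (via the reduction preceding Lemma~\ref{lem:branching}) so that non-branching nodes have identical left and right subtrees, which lets it assume without loss of generality that the root itself is branching. Your version avoids this preprocessing by simply descending into one subtree whenever the current node is non-branching; this is slightly more direct and sidesteps the need to justify the normalization, at the cost of the extra bookkeeping you already flagged (the extracted Littlestone tree is no longer level-aligned with $\Tcal$, which is harmless since ordinary Littlestone trees carry no level constraint). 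Either way the induction goes through.
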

\noindent  We compare our dimensions to other existing dimensions in multiclass learning in Section \ref{app:comparisons}.


\section{A Trichotomy in the Realizable Setting} \label{Trichotomy}

We start by establishing upper and lower bounds on the minimax expected number of mistakes in the realizable setting in terms of the Level-constrained Littlestone dimension and the Level-constrained Branching dimension.

\begin{theorem} [Mistake bound] \label{thm:mistakebnd} For every concept class $\mathcal{C} \subseteq \Ycal^{\Xcal}$, we have
\[\frac{1}{2} \min\Big\{\max \Big\{\operatorname{D}(\mathcal{C}), \floor{\log{T}}\, \cdot \mathbbm{1}[\operatorname{B}(\mathcal{C}) = \infty]  \Big\}, T \Big\}\,  \leq \, \operatorname{M}^{\star}(T, \mathcal{C})\, \leq \,  \min \left\{ \operatorname{B}(\mathcal{C}),  \operatorname{D}(\mathcal{C}) \, \log \left(\frac{eT}{\operatorname{D}(\mathcal{C})} \right),  T \right\}. \]
\end{theorem}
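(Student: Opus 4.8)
The statement bundles together four bounds: two lower bounds (the $\operatorname{D}(\mathcal{C})$ part and the $\floor{\log T}\cdot\mathbbm{1}[\operatorname{B}(\mathcal{C})=\infty]$ part, both capped at $T$) and three upper bounds (the $\operatorname{B}(\mathcal{C})$ part, the $\operatorname{D}(\mathcal{C})\log(eT/\operatorname{D}(\mathcal{C}))$ part, and the trivial $T$). I would prove them as separate lemmas and then combine.

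For the \emph{lower bounds}, the plan is a standard adversary argument on a shattered tree. If $d = \min\{\operatorname{D}(\mathcal{C}), T\}$, take a shattered Level-constrained Littlestone tree of depth $d$: since the instances are constant across each level, the adversary can present $x_1,\dots,x_d$ (padding with an arbitrary instance if $d < T$), and at each round, after seeing the learner's possibly randomized prediction, walk down whichever edge disagrees with the learner's label — this is possible because the two outgoing edges carry \emph{distinct} labels, so at least one of them differs from $\hat y_t$, forcing an expected mistake of at least $1/2$ on each of the first $d$ rounds; realizability is guaranteed by the shattering concept $c_\sigma$ for the chosen path. This gives $\operatorname{M}^\star \ge \frac12\min\{\operatorname{D}(\mathcal{C}),T\}$. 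For the $\log T$ term, when $\operatorname{B}(\mathcal{C}) = \infty$ one needs an adversary that forces $\Omega(\log T)$ mistakes; the natural route is to show that $\operatorname{B}(\mathcal{C}) = \infty$ implies that for every $p$ there is a shattered Level-constrained Branching tree of large depth all of whose paths have branching factor $> p$, and then run a "majority-style" adversary that at each branching node sends the learner down the subtree containing at least half the still-viable leaves — forcing a mistake only at branching nodes but guaranteeing that after $k$ forced mistakes the surviving leaf-count has dropped by at most a factor $2^k$ (or rather, that one needs $\Omega(\log T)$ branching points along the played path); I would make this precise by choosing the tree depth $\approx T$ and arguing combinatorially that the path the majority-adversary takes must contain at least $\approx\log_2 T$ branching nodes (if the branching factor along the chosen path stayed below $\log_2 T$, we could contradict, for suitably large $p$, the definition of $\operatorname{B}(\mathcal{C}) = \infty$). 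Combining the two lower bounds via "$\max$" and capping at $T$ yields the claimed left-hand side.

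For the \emph{upper bounds}: the $O(T)$ bound is trivial (predict arbitrarily). The $\operatorname{B}(\mathcal{C})\,$-bound is, as the introduction indicates, obtained by a modification of the Standard Optimal Algorithm — I would run an SOA-type algorithm over the (projected) version space, where at each round the learner predicts the label $y$ maximizing the "Level-constrained Branching dimension of the version space restricted to $c(x_t)=y$", and argue that every mistake strictly decreases this dimension, so at most $\operatorname{B}(\mathcal{C})$ mistakes occur; the subtlety (versus ordinary SOA) is that one must use the level-constrained structure and the "$\argmin$ over paths" formulation of $\operatorname{B}$ to show the dimension genuinely drops on a mistake. The $\operatorname{D}(\mathcal{C})\log(eT/\operatorname{D}(\mathcal{C}))$ bound is the technically central one: the idea is a Halving-type algorithm run on $\mathcal{C}$ projected onto the known sequence $x_{1:T}$, but since the projection can be infinite when $\Ycal$ is unbounded, one replaces "number of surviving concepts" with a weighted count based on the new notion of shattering the paper advertises. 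Concretely, I expect one associates to the version space at round $t$ a combinatorial potential (e.g., the number of leaves of a maximal shattered level-constrained tree compatible with $x_{t:T}$, or a Sauer–Shelah–type count of "realizable label patterns that are $\operatorname{D}$-shattering-rich"), shows this potential is at most $\binom{T}{\le \operatorname{D}(\mathcal{C})}$-like, i.e. $(eT/\operatorname{D}(\mathcal{C}))^{\operatorname{D}(\mathcal{C})}$, and that predicting the plurality label under an appropriate measure forces the potential to drop by a constant factor on each mistake — giving at most $\log_2$ of that bound $= O(\operatorname{D}(\mathcal{C})\log(eT/\operatorname{D}(\mathcal{C})))$ mistakes.

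\textbf{Main obstacle.} The crux is the $\operatorname{D}(\mathcal{C})\log(eT/\operatorname{D}(\mathcal{C}))$ upper bound: ordinary Halving plus Sauer–Shelah collapses when $|\Ycal| = \infty$ (the constant-functions example in the text), so one needs the new "sequential shattering" notion and a corresponding SSP-type bound that counts only the label patterns that matter for making progress. Getting the right potential — one that is simultaneously (a) bounded by $(eT/\operatorname{D})^{\operatorname{D}}$ via a level-constrained Sauer–Shelah argument, and (b) provably multiplicatively decreased by a plurality/majority prediction rule on each mistake — is the delicate step; everything else (the lower bounds, the $\operatorname{B}(\mathcal{C})$-SOA, the trivial bound) follows standard patterns once the definitions are unwound.
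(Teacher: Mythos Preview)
Your upper-bound plans and the $\operatorname{D}(\mathcal{C})/2$ lower bound match the paper. For the $\operatorname{D}(\mathcal{C})\log(eT/\operatorname{D}(\mathcal{C}))$ bound the paper's potential is exactly $S(V)=$ the number of subsequences of $x_{1:T}$ that are level-constrained-Littlestone-shattered by the current version space $V$; the bound $S(\mathcal{C})\le\sum_{i\le\operatorname{D}(\mathcal{C})}\binom{T}{i}$ is immediate counting (a shattered subsequence has length $\le\operatorname{D}(\mathcal{C})$), and the halving step uses a one-inclusion-style injection $q\mapsto x_t\circ q$ from the subsequences shattered by both $V^t_{(x_t,y_t)}$ and $V^t_{(x_t,\hat y_t)}$ into those shattered by neither, which all drop out. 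So your description of the crux is accurate.

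The genuine gap is the $\floor{\log T}/2$ lower bound. Your plan of ``take tree depth $\approx T$ and argue the played path has $\ge\log_2 T$ branching nodes'' does not work as stated: $\operatorname{B}(\mathcal{C})=\infty$ guarantees, for each $p$, a shattered Level-constrained Branching tree with minimum branching factor $>p$, but the depth $d$ of that tree is completely uncontrolled and may be far larger than $T$. Truncating to depth $T$ can excise all the branching (it might all sit below level $T$); not truncating means the adversary cannot present the path in $T$ rounds. Your ``majority-style'' adversary does not bridge this --- in a complete binary tree both subtrees have equal leaf-count, so the majority criterion is vacuous, and nothing in your argument bounds how many rounds elapse before $\log_2 T$ branching nodes are reached along the chosen path. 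The paper closes the gap via a structural lemma you are missing: once the tree is normalized so that every path has branching factor \emph{exactly} $q$ and every non-branching node has identical left and right subtrees, the number of \emph{levels} containing any branching node is at most $2^q-1$ (an easy induction on $q$). Taking $q=\floor{\log T}$, the adversary presents only the $\le T-1$ branching-level instances (padding the remainder), samples a uniform root-to-leaf path $\sigma$, and the standard calculation yields $\ge q/2$ expected mistakes. The missing idea is this compression of a possibly very deep tree onto its branching levels together with the $2^q-1$ bound on how many such levels there can be.
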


\noindent One can trivially upper bound $\operatorname{M}^{\star}(T, \mathcal{C})$ by $\operatorname{L(\mathcal{C})}$. However, by Proposition \ref{prop:relations}, our upper bound in terms of $\operatorname{B}(\mathcal{C})$ is sharper. We can also infer from the proof in Section \ref{sec:logTlb} that when $T$ is large enough (namely $T \gg 2^{\operatorname{B}(\mathcal{C})})$, the lower bound in the realizable setting is also $\frac{\operatorname{B}(\mathcal{C})}{2}$. 

Given Theorem \ref{thm:mistakebnd}, we immediately infer a trichotomy in minimax rates. 
\begin{corollary} [Trichotomy]\label{cor:trichotomy} For every concept class $\mathcal{C} \subseteq \Ycal^{\Xcal}$, we have

$$
\operatorname{M}^{\star}(T, \mathcal{C})=\begin{cases}
			\Theta(1), & \text{if $\operatorname{B}(\mathcal{C}) < \infty.$}\\
            \Theta(\log T) & \text{if $\operatorname{D}(\mathcal{C}) < \infty$ and  $\operatorname{B}(\mathcal{C}) = \infty$.}\\
            \Theta(T), & \text{if } \operatorname{D}(\mathcal{C}) =  \infty.
		 \end{cases}
$$
\end{corollary}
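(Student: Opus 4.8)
The plan is to derive the corollary directly from the quantitative mistake bound in Theorem \ref{thm:mistakebnd}, treating the three regimes separately and invoking the ordering $\operatorname{D}(\mathcal{C}) \le \operatorname{B}(\mathcal{C}) \le \operatorname{L}(\mathcal{C})$ from Proposition \ref{prop:relations}. Recall the theorem states
\[
\tfrac{1}{2} \min\!\Big\{\max\!\big\{\operatorname{D}(\mathcal{C}), \floor{\log T}\cdot \mathbbm{1}[\operatorname{B}(\mathcal{C})=\infty]\big\}, T\Big\} \;\le\; \operatorname{M}^{\star}(T,\mathcal{C}) \;\le\; \min\!\Big\{\operatorname{B}(\mathcal{C}),\, \operatorname{D}(\mathcal{C})\log\!\big(\tfrac{eT}{\operatorname{D}(\mathcal{C})}\big),\, T\Big\}.
\]

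First, suppose $\operatorname{B}(\mathcal{C}) < \infty$. By Proposition \ref{prop:relations}, $\operatorname{D}(\mathcal{C}) \le \operatorname{B}(\mathcal{C}) < \infty$ as well. The upper bound gives $\operatorname{M}^{\star}(T,\mathcal{C}) \le \operatorname{B}(\mathcal{C}) = O(1)$. For the matching lower bound, note that since $\operatorname{B}(\mathcal{C}) = O(1)$, the indicator $\mathbbm{1}[\operatorname{B}(\mathcal{C})=\infty]$ vanishes, so for $T \ge \operatorname{D}(\mathcal{C})$ the lower bound reads $\tfrac12 \operatorname{D}(\mathcal{C})$; one then checks this is $\Theta(1)$ unless $\operatorname{D}(\mathcal{C}) = 0$, in which case $\mathcal{C}$ is learnable with zero mistakes and the bound is trivially $\Theta(1)$ — one must be slightly careful stating the constant here, perhaps by additionally observing (as the text remarks in Section \ref{sec:logTlb}) that for $T \gg 2^{\operatorname{B}(\mathcal{C})}$ the lower bound sharpens to $\operatorname{B}(\mathcal{C})/2$, which pins down the constant rate more precisely. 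In any case $\operatorname{M}^{\star}(T,\mathcal{C}) = \Theta(1)$.

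Second, suppose $\operatorname{D}(\mathcal{C}) < \infty$ but $\operatorname{B}(\mathcal{C}) = \infty$. The upper bound now gives $\operatorname{M}^{\star}(T,\mathcal{C}) \le \operatorname{D}(\mathcal{C})\log(eT/\operatorname{D}(\mathcal{C})) = O(\log T)$ (the $\operatorname{B}(\mathcal{C})$ term is vacuous). Since $\operatorname{B}(\mathcal{C}) = \infty$, the indicator is $1$, so for $T \ge \operatorname{D}(\mathcal{C})$ we get $\operatorname{M}^{\star}(T,\mathcal{C}) \ge \tfrac12 \min\{\floor{\log T}, T\} = \tfrac12\floor{\log T} = \Omega(\log T)$ (using $\operatorname{D}(\mathcal{C}) < \infty$ so $\max\{\operatorname{D}(\mathcal{C}),\floor{\log T}\} = \floor{\log T}$ for $T$ large). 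Hence $\operatorname{M}^{\star}(T,\mathcal{C}) = \Theta(\log T)$. Third, suppose $\operatorname{D}(\mathcal{C}) = \infty$. Then for every fixed $T$, taking $\operatorname{D}(\mathcal{C})$ arbitrarily large in the lower bound yields $\operatorname{M}^{\star}(T,\mathcal{C}) \ge \tfrac12 \min\{\operatorname{D}(\mathcal{C}), T\} = \tfrac12 T$, while the trivial upper bound $\operatorname{M}^{\star}(T,\mathcal{C}) \le T$ always holds; so $\operatorname{M}^{\star}(T,\mathcal{C}) = \Theta(T)$.

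The argument is essentially bookkeeping, so there is no serious obstacle; the only delicate point is the $\Theta(1)$ case, where one should make sure the asserted $\Theta(1)$ is genuinely bounded away from zero for nontrivial classes (via the $T \gg 2^{\operatorname{B}(\mathcal{C})}$ refinement) and handle the degenerate $\operatorname{D}(\mathcal{C}) = 0$ / $\operatorname{B}(\mathcal{C}) = 0$ cases, and to confirm that the three cases of the corollary are exhaustive and mutually exclusive — which follows immediately from $\operatorname{D}(\mathcal{C}) \le \operatorname{B}(\mathcal{C})$, since $\operatorname{B}(\mathcal{C}) < \infty \Rightarrow \operatorname{D}(\mathcal{C}) < \infty$.
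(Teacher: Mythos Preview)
Your proposal is correct and follows essentially the same approach as the paper: both derive the trichotomy directly from Theorem \ref{thm:mistakebnd} by reading off the appropriate branch of the $\min/\max$ in each of the three regimes. You add a bit more care than the paper does (explicitly invoking Proposition \ref{prop:relations}, worrying about the degenerate $\operatorname{D}(\mathcal{C})=0$ case, and checking exhaustiveness of the cases), but these are refinements of the same bookkeeping argument rather than a different route.
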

\begin{proof}(of Corollary \ref{cor:trichotomy}) When $\operatorname{B}(\mathcal{C}) < \infty$, Theorem \ref{thm:mistakebnd} gives that $\frac{1}{2} \operatorname{D}(\mathcal{C})\leq \operatorname{M}^{\star}(T, \mathcal{C}) \leq \operatorname{B}(\mathcal{C})$ for $T \geq \operatorname{D}(\mathcal{C})$. When $\operatorname{B}(\mathcal{C}) = \infty$ but $\operatorname{D}(\mathcal{C}) < \infty$, Theorem \ref{thm:mistakebnd} gives that $\frac{1}{2}\, \floor{\log{T}}\, \leq \operatorname{M}^{\star}(T, \mathcal{C}) \leq \operatorname{D}(\mathcal{C}) \log \Bigl(\frac{eT}{\operatorname{D}(\mathcal{C})} \Bigl)$ for $\floor{\log{T}} \geq \operatorname{D}(\mathcal{C})$. Finally, when $\operatorname{D}(\mathcal{C}) = \infty$, Theorem \ref{thm:mistakebnd} gives that $\frac{T}{2}\leq  \operatorname{M}^{\star}(T, \mathcal{C}) \leq T$. 
\end{proof}

The remainder of this Section is dedicated to proving Theorem \ref{thm:mistakebnd}.

\subsection{Proof of Upperbound $\operatorname{B}(\mathcal{C})$}
\begin{proof}
    Fix $n \in \mathbb{N}$, a sequence of instances $x_{1:n}: = (x_1, \ldots, x_n) \in \Xcal^n$, a sequence of functions $Y_{1:n} = (Y_1, \ldots, Y_n)$ such that $Y_{t}: \{0,1\}^n \to \Ycal$, and set of concepts $V \subseteq \mathcal{C}$. If $\forall \sigma \in \{0,1\}^n$, there exists $c_{\sigma } \in V$  such that $c_{\sigma}(x_t) = Y_{t}(\sigma_{\leq t})$ for all $t \in [n]$, then define $\operatorname{B}(V, x_{1:n}, Y_{1:n}) := 
    \argmin_{\sigma \in \{0,1\}^n } \sum_{t=1}^n \mathbbm{1}\{Y_{t}((\sigma_{<t}, 0)) \neq Y_{t}((\sigma_{<t}, 1)).$
 Otherwise, define $\operatorname{B}(V, x_{1:n}, Y_{1:n}):= 0$. Recall that we can represent $x_{1:n}$ and $Y_{1:n}$ with level-constrained trees $\Tcal$ of depth $n$. With the tree representation, $\operatorname{B}(V, x_{1:n}, Y_{1:n}):= 0$ if $V$ does not shatter $\Tcal$. If $\Tcal$ is shattered by $V$, then $\operatorname{B}(V, x_{1:n}, Y_{1:n})$ is the minimum branching factor across all the root-to-leaf paths in $\Tcal$. Recall that the branching factor of a path is the number of nodes in this path whose left and right outgoing edges are labeled by two distinct elements of $\Ycal$. 
In this proof, we work with an instance-dependent complexity measure of $V \subseteq \mathcal{C}$ defined as
\[\operatorname{B}(V, x_{1:n}) := \, \sup_{Y_{1:n}}\, \operatorname{B}(V, x_{1:n}, Y_{1:n}). \]

We now define a learning algorithm that obtains the claimed mistake bound of $\operatorname{B}(\mathcal{C})$. Fix a time horizon $T \in \mathbb{N}$, and let $x_{1:T} = (x_1, x_2, \ldots, x_T)$ denote the sequence of instances revealed by the adversary. Initialize $V_1 := \mathcal{C}$. For every $t \in \{1, \ldots, T\}$, if we have $\{c(x_t) : c \in V_t \} =\{y\}$, then predict $\hat{y}_t = y$. Otherwise, 
define $V_t^{y} = \{c \in V_{t}\,:\, c(x_t) = y \}$ for all $y \in \Ycal$, and predict $\hat{y}_{t} = \argmax_{y \in \Ycal}\, \operatorname{B}(V_t^y, x_{t+1:T})$. Finally, the learner receives a feedback $y_{t} \in \Ycal$, and updates $V_{t+1} \gets V_{t}^{y_t}$.  When $t=T$, the sequence $x_{T+1:T}$ is null and we define  $\hat{y}_{T} = \argmax_{y \in \Ycal}\, \operatorname{B}(V_T^y)$.  
For this learning algorithm, we prove that
\begin{equation}\label{eq:potential}
\operatorname{B}(V_{t+1}, x_{t+1:T}) \leq \operatorname{B}(V_{t}, x_{t:T}) - \mathbbm{1}\{y_t \neq \hat{y}_t\}. 
\end{equation}
Rearranging and summing over $t \in [T]$ rounds, we obtain:
\begin{align*}
    & \sum_{t=1}^T \mathbbm{1}\{y_t \neq \hat{y}_t\} 
    \leq \sum_{t=1}^T \Big(\operatorname{B}(V_{t}, x_{t:T}) - \operatorname{B}(V_{t+1}, x_{t+1:T}) \Big)
    = \operatorname{B}(V_1, x_{1:T}) - \operatorname{B}(V_{T+1}, x_{T+1:T})\\
    & \leq \operatorname{B}(V_1, x_{1:T})
    \leq \operatorname{B}(\mathcal{C})
\end{align*}
The equality above follows because the sum telescopes. The final inequality follows because  $V_1 = \mathcal{C}$ and  the level-constrained branching dimension of $\mathcal{C}$ is defined as $\operatorname{B}(\mathcal{C}) = \sup_{T \in \mathbb{N}}\, \sup_{x_{1:T} \in \Xcal^T}\, \operatorname{B}(\mathcal{C}, x_{1:T})$.

We now prove inequality \eqref{eq:potential}. There are two cases to consider: (a) $y_t = \hat{y}_t$ and (b) $y_t \neq \hat{y}_t$. Starting with (a),  let  $y_t =  \hat{y}_t$. Recall that $ \operatorname{B}(V_{t+1}, x_{t+1:T}) = \operatorname{B}(V_{t}^{y_t}, x_{t+1:T})$. Since $c(x_t) = y_t$ for all $h \in V_t^{y_t}$, we must have $\operatorname{B}(V_{t}^{y_t}, x_{t+1:T}) \leq \operatorname{B}(V_{t}^{y_t}, x_{t:T})$. Finally, using the fact that $V_t^{y_t} \subseteq V_t$, we have $\operatorname{B}(V_{t}^{y_t}, x_{t:T}) \leq \operatorname{B}(V_{t}, x_{t:T}) $. This establishes \eqref{eq:potential} for this case.

Moving to (b), let $y_t \neq \hat{y}_t$. Note that we must have $ \operatorname{B}(V_{t}, x_{t:T}) >0$. Otherwise, if $\operatorname{B}(V_{t}, x_{t:T}) =0$, then  we have $\{c(x_t) \, : c \in V_t\} =\{y\}$. So, by our prediction rule, we cannot have $y_t \neq \hat{y}_t$ under realizability. To establish \eqref{eq:potential},  we want to show that $\operatorname{B}(V_{t+1}, x_{t+1:T}) <\operatorname{B}(V_{t}, x_{t:T}) $.  Suppose, for the sake of contradiction, we instead have $\operatorname{B}(V_{t+1}, x_{t+1:T}) \geq \operatorname{B}(V_{t}, x_{t:T})$. Then, let us define $d :=\operatorname{B}(V_{t+1}, x_{t+1:T}) $. If $d=0$, then our proof is complete because $0 \leq  \operatorname{B}(V_{t}, x_{t:T}) - \mathbbm{1}\{y_t \neq \hat{y}_t\} $. Assume that $d>0$ and recall that $V_{t+1} = V_t^{y_t}$. By definition of $\operatorname{B}(V_t^{y_t}, x_{t+1:T})$ and its equivalent shattered-trees representation, there exists a level-constrained tree $\Tcal_{y_t}$ of depth $T-t$ whose internal nodes are labeled by $x_t, \ldots, x_T$ and is shattered by $V_t^{y_t}$. Moreover, every path down $\Tcal_{y_t}$ has branching factor $\geq d$.

Next, as $\hat{y}_t = \argmax_{y \in \Ycal} \operatorname{B}(V_{t}^{y}, x_{t+1:T})$, we further have $\operatorname{B}(V_{t}^{\hat{y}_t}, x_{t+1:T}) \geq \operatorname{B}(V_{t}^{y_t}, x_{t+1:T}) \geq d$. Thus, there exists another level-constrained tree $\Tcal_{\hat{y}_t}$ of depth $T-t$ whose internal nodes are labeled by $x_t, \ldots, x_T$, that is shattered by $V_t^{\hat{y}_t}$, and every path down $\Tcal_{\hat{y}_t}$ has branching factor  $\geq d$. Finally, consider a new tree $\Tcal$ with root-node labeled by $x_t$, the left-outgoing edge from the root node is labeled by $y_t$, and the right outgoing edge is labeled by $\hat{y}_t$. Moreover, the subtree following the outgoing edge labeled by $y_t$ is $\Tcal_{y_t}$, and the subtree following the outgoing edge labeled by $\hat{y}_t$ is $\Tcal_{\hat{y}_t}. $ Since both $\Tcal_{y_t}$ and $\Tcal_{\hat{y}_t}$ are valid level-constrained trees each with internal nodes labeled by $x_{t+1}, \ldots, x_T $, the newly constructed tree $\Tcal$ is a also a level-constrained trees of depth $T-t+1$ with internal nodes labeled by $x_t, \ldots, x_{T}$. In addition, as $\Tcal_{y_t}$ and $\Tcal_{\hat{y}_t}$ are shattered by $V_t^{y_t}$ and $V_t^{\hat{y}_t}$ respectively, the tree $\Tcal$ must be shattered by $V_t^{y_t} \cup V_t^{\hat{y}_t}$. Finally, as  every path down each sub-trees $\Tcal_{y_t}$ and $\Tcal_{\hat{y}_t}$ has branching factor $\geq d$ and $y_t \neq \hat{y}_t$, every path of $\Tcal$ must have branching factor $\geq d+1$. This shows that $\operatorname{B}(V_t^{y_t} \cup V_t^{\hat{y}_t}, x_{t:T}) \geq d+1$.  And since $V_t^{y_t} \cup V_t^{\hat{y}_t} \subseteq V_t$, we have $d+1 \leq \operatorname{B}(V_t^{y_t} \cup V_t^{\hat{y}_t}, x_{t:T}) \leq \operatorname{B}(V_t, x_{t:T})$ by monotonicity. This contradicts our assumption that 
$d := \operatorname{B}(V_{t+1}, x_{t+1:T}) \geq \operatorname{B}(V_{t}, x_{t:T})$. Therefore, we must have $\operatorname{B}(V_{t+1}, x_{t+1:T}) <\operatorname{B}(V_{t}, x_{t:T}) $.  This establishes \eqref{eq:potential}, completing our proof. \end{proof}

\subsection{Proof of Upperbound $\operatorname{D}(\mathcal{C}) \log \Bigl(\frac{eT}{\operatorname{D}(\mathcal{C})} \Bigl)$} \label{sec:upper2}

\begin{proof}
Fix the time horizon $T \in \mathbbm{N}$ and let $x_{1:T} : = (x_1, ..., x_T) \in \Xcal^T$ be the sequence of $T$ instances revealed to the learner at the beginning of the game. We say a subsequence $x^{\prime}_{1:n} := (x^{\prime}_1, ..., x^{\prime}_n)$, preserving the same order as in $x_{1:T}$, is shattered by  $V \subseteq \mathcal{C}$ if there exists a sequence of functions $\{Y_t\}_{t=1}^n$, where $Y_t: \{0, 1\}^t \rightarrow \Ycal$, such that for every $\sigma \in \{0, 1\}^n$, we have that 

\begin{itemize}
    \item[(i)] $Y_{t}(\sigma_{<t}, 0) \neq Y_t(\sigma_{<t}, 1)$ for all $t \in [n]$,
    \item[(ii)]  $\exists c_{\sigma} \in V$ such that $c_{\sigma}(x_t) = Y_t(\sigma_{\leq t})$ for all $t \in [n]$. 
\end{itemize}

\noindent For every $V \subseteq \mathcal{C}$, let $\operatorname{S}(V)$ be the number of subsequences of $x_{1:T}$ shattered by $V$. In addition, for every $(x, y) \in \Xcal \times \Ycal$, let $V_{(x, y)} := \{c \in V : c(x) = y\}.$ Consider the following online learner. At the beginning of the game, the learner initializes $V^1 = \mathcal{C}$. Then, in every round $t \in [T]$, the learner predicts $\hat{y}_t \in \argmax_{y \in \Ycal} \operatorname{S}(V^t_{(x_t, y)})$, receives $y_t \in \Ycal$, and updates $V^{t+1} \leftarrow V^t_{(x_t, y_t)}.$

For this learning algorithm, we claim that 

$$\operatorname{S}(V^{t+1}) \leq \max\Bigl\{\mathbbm{1}\{y_t = \hat{y}_t\}, \frac{1}{2}\Bigl\}\operatorname{S}(V^t)$$

for every round $t \in [T]$. This implies the stated mistake bound since $\operatorname{S}(\mathcal{C}) \leq \sum_{i=0}^{\operatorname{D}(\mathcal{C})}\binom{T}{i} \leq \Bigl(\frac{eT}{\operatorname{D}(\mathcal{C})}\Bigl)^{\operatorname{D}(\mathcal{C})}$ and the learner can make at most $\log(\operatorname{S}(\mathcal{C}))$ mistakes before $\operatorname{S}(V^t) = 1$. We now prove this claim by considering the case where $\hat{y}_t = y_t$ and $\hat{y}_t \neq y_t$ separately. 

Let $t \in [T]$ be a round where $\hat{y}_t = y_t$. Then, $\operatorname{S}(V^{t+1}) \leq \operatorname{S}(V^t)$ since $V^{t+1} = V_{(x_t, y_t)}^t \subseteq V^t.$ Now, let $t \in [T]$ be a round where $\hat{y}_t \neq y_t$. We need to show that $\operatorname{S}(V^{t+1}) \leq \frac{1}{2} \operatorname{S}(V^t)$. For any $V \subseteq \mathcal{C}$, let $\operatorname{Sh}(V)$ be the set of all subsequences of $x_{1:T}$ that are shattered by $V$. Then, for any subsequence $q \in \operatorname{Sh}(V^t)$, only one of the following properties must be true: 

\begin{itemize}
\item[(1)] $q \notin \operatorname{Sh}(V^t_{(x_t, y_t)})$ and  $q \notin \operatorname{Sh}(V^t_{(x_t, \hat{y}_t)}),$
\item[(2)] $q \in \operatorname{Sh}(V^t_{(x_t, y_t)}) \Delta \operatorname{Sh}(V^t_{(x_t, \hat{y}_t)}),$
\item[(3)] $q \in \operatorname{Sh}(V^t_{(x_t, y_t)}) \cap  \operatorname{Sh}(V^t_{(x_t, \hat{y}_t)}),$
\end{itemize}

\noindent where $\Delta$ denotes the symmetric difference. For every $i \in \{1, 2, 3\}$, let $\operatorname{Sh}^i(V^t) \subseteq \operatorname{Sh}(V^t)$ be the subset of $\operatorname{Sh}(V^t)$ that satisfies property $(i)$. Note that $\operatorname{Sh}(V^t) = \bigcup_{i=1}^3 \operatorname{Sh}^i(V^t)$ and $\{\operatorname{Sh}^i(V^t)\}_{i=1}^3$ are pairwise disjoint. Therefore, $\{\operatorname{Sh}^i(V^t)\}_{i=1}^3$ forms a partition of $\operatorname{Sh}(V^t).$ For each $i \in \{1, 2, 3\}$, we compute how many elements of $\operatorname{Sh}^i(V^t)$ we drop when going from $\operatorname{Sh}(V^t)$ to $\operatorname{Sh}(V^{t+1}).$ We can then upperbound $|\operatorname{Sh}(V^t_{(x_t, y_t})| = \operatorname{S}(V^{t+1})$ by lower bounding $|\operatorname{Sh}(V^t) \setminus \operatorname{Sh}(V^{t+1})|$, the number of elements we drop across all of the subsets  $\{\operatorname{Sh}^i(V^t)\}_{i=1}^3$ when going from $V^t$ to $V^{t+1}$.

Starting with $i = 1$, observe that for every $q \in \operatorname{Sh}^1(V^t)$, we have that $q \notin \operatorname{Sh}(V^{t}_{(x_t, y_t)})$. Therefore, $|\operatorname{Sh}^1(V^t) \cap \operatorname{Sh}(V^t_{(x_t, y_t)})| = 0$, implying that we drop all the elements from $\operatorname{Sh}^1(V^t)$ when going from $\operatorname{Sh}(V^t)$ to $\operatorname{Sh}(V^{t+1})$.

For the case where $i = 2$, note that $\operatorname{Sh}(V^t_{(x_t, y_t)}), \operatorname{Sh}(V^t_{(x_t, \hat{y}_t)}) \subseteq \operatorname{Sh}(V^t)$ and $\operatorname{S}(V^t_{(x_t, \hat{y}_t)}) \geq \operatorname{S}(V^t_{(x_t, y_t)})$, where the latter inequality is true by the definition of the prediction rule.  Moreover, using the fact that $\{\operatorname{Sh}^i(V^t)\}_{i=1}^3$ forms a partition of $\operatorname{Sh}(V^t)$, we can write 
$$\operatorname{S}(V^t_{(x_t, \hat{y}_t)}) = |\operatorname{Sh}^3(V^t)| + |\operatorname{Sh}^2(V^t) \cap \operatorname{Sh}(V^t_{(x_t, \hat{y}_t)})|$$
and 
$$\operatorname{S}(V^t_{(x_t, y_t)}) = |\operatorname{Sh}^3(V^t)| + |\operatorname{Sh}^2(V^t) \cap \operatorname{Sh}(V^t_{(x_t, y_t})|.$$
Since $\operatorname{S}(V^t_{(x_t, \hat{y}_t)}) \geq \operatorname{S}(V^t_{(x_t, y_t)})$, we get that $|\operatorname{Sh}^2(V^t) \cap \operatorname{Sh}(V^t_{(x_t, \hat{y}_t)})| \geq |\operatorname{Sh}^2(V^t) \cap \operatorname{Sh}(V^t_{(x_t, y_t)})|.$ This implies that $|\operatorname{Sh}^2(V^t) \cap \operatorname{Sh}(V^t_{(x_t, y_t)})| \leq \frac{1}{2}|\operatorname{Sh}^2(V^t)|$ since $\Bigl(\operatorname{Sh}^2(V^t) \cap \operatorname{Sh}(V^t_{(x_t, \hat{y}_t)})\Bigl) \cup \Bigl(\operatorname{Sh}^2(V^t) \cap \operatorname{Sh}(V^t_{(x_t, y_t)})\Bigl) = \operatorname{Sh}^2(V^t)$ and $\Bigl(\operatorname{Sh}^2(V^t) \cap \operatorname{Sh}(V^t_{(x_t, \hat{y}_t)})\Bigl) \cap \Bigl(\operatorname{Sh}^2(V^t) \cap \operatorname{Sh}(V^t_{(x_t, y_t)})\Bigl) = \emptyset.$ Thus, we drop at least half the elements from $\operatorname{Sh}^2(V^t)$ when going from $\operatorname{Sh}(V^t)$ to $\operatorname{Sh}(V^{t+1})$.

Finally, consider when $i = 3$. Fix a $q \in \operatorname{Sh}^3(V^t)$. We claim that $x_1, ..., x_t \notin q$. This is because every $c \in V^t$ outputs $y_j$ on $x_j$ for all $j \leq t - 1$. In addition, $x_t \notin q$ because $q \in\operatorname{Sh}(V^t_{(x_t, y_t)}) \cap  \operatorname{Sh}(V^t_{(x_t, \hat{y}_t)})$ and every concept in $V^t_{(x_t, y_t)}$ and $V^t_{(x_t, \hat{y}_t)}$ outputs $y_t$ and $\hat{y_t}$ on $x_t$ respectively. Thus, the sequence $x_t \circ q$, obtained by concatenating $x_t$ to the front of $q$, is a valid subsequence of $x_{1:T}$. Since $\hat{y}_t \neq y_t$, we also have that $x_t \circ q$ is shattered by $V^t$. Using the fact that $x_t \circ q \notin \operatorname{Sh}(V^t_{(x_t, y_t)})$ and  $x_t \circ q \notin \operatorname{Sh}(V^t_{(x_t, \hat{y}_t)})$, gives that $x_t \circ q \in \operatorname{Sh}^1(V_t).$ Since our choice of $q$ was arbitrary, this implies that for every $q \in \operatorname{Sh}^3(V_t)$, there exists a subsequence $q^{\prime} = x_t \circ q \in \operatorname{Sh}^1(V^t)$, ultimately giving that $|\operatorname{Sh}^1(V^t)| \geq |\operatorname{Sh}^3(V^t)|.$

To complete the proof, we lowerbound the total number of dropped elements when going from $\operatorname{Sh}(V^t)$ to $\operatorname{Sh}(V^{t+1})$ by
\begin{align*}
|\operatorname{Sh}(V^t) \setminus \operatorname{Sh}(V^t_{(x_t, y_t)})| &\geq |\operatorname{Sh}^1(V^t)| + \frac{|\operatorname{Sh}^2(V^t)|}{2}\\
&\geq \frac{|\operatorname{Sh}^1(V^t)|}{2} + \frac{|\operatorname{Sh}^2(V^t)|}{2} + \frac{|\operatorname{Sh}^3(V^t)|}{2}\\
&= \frac{|\operatorname{Sh}(V^t)|}{2} = \frac{\operatorname{S}(V^t)}{2}.
\end{align*}
The number of remaining elements is then $\operatorname{S}(V^{t+1}) = |\operatorname{Sh}(V^t_{(x_t, y_t)})| = |\operatorname{Sh}(V^t)| -  |\operatorname{Sh}(V^t) \setminus \operatorname{Sh}(V^t_{(x_t, y_t)})|\leq \frac{1}{2}\operatorname{S}(V^t),$ as needed.
\end{proof}

We end this section by noting that the algorithm in the proof of Theorem \ref{thm:mistakebnd} can be made conservative (i.e. does not update when it is correct) with the same mistake bound. This conservative-version of the realizable learner will be used when proving regret bounds in the agnostic setting (see Section \ref{Agnostic}). 

\subsection{Proof of Lowerbound $\frac{\operatorname{D}(\mathcal{C})}{2}$}

\begin{proof}
Fix a transductive online learner $\Acal$. We will construct a randomized, hard realizable stream such that the expected number of mistakes made by $\Acal$ is at least $\frac{\operatorname{D}(\mathcal{C})}{2}.$ Using the probabilistic method then gives the stated lowerbound. 

Let $d: = \operatorname{D}(\mathcal{C})$. Then, by Definition \ref{def:lcl}, there exists a sequence of instances $x_1, .., x_d \in \Xcal^d$ and a sequence of functions $\{Y_t\}_{t=1}^d$ where $Y_t:\{0,1\}^{t} \to \Ycal$ such that for every $\sigma \in \{0,1\}^d$, the following holds:
\begin{itemize}
    \item[(i)] $Y_{t}(\sigma_{<t}, 0) \neq Y_t(\sigma_{<t}, 1)$ for all $t \in [d]$.
    \item[(ii)]  $\exists c_{\sigma} \in \mathcal{C}$ such that $c_{\sigma}(x_t) = Y_t(\sigma_{\leq t})$ for all $t \in [d]$. 
\end{itemize}
Let $\sigma \sim \{0, 1\}^d$ a denote bitstring of length $d$ sampled uniformly at random and consider the stream $(x_1, Y_1(\sigma_{\leq 1})), ..., (x_d, Y_d(\sigma_{\leq d})).$ By the definition, there exists a concept $c_{\sigma} \in \mathcal{C}$ such that $c_{\sigma}(x_t) = Y_t(\sigma_{\leq t})$ for all $t \in [d]$. Moreover, observe that 
\begin{align*}
\mathbb{E}\left[\sum_{t=1}^d \mathbbm{1}\{\mathcal{A}(x_t)\neq Y_t(\sigma_{\leq t})\} \right] &= \sum_{t=1}^d \mathbb{E}\Big[\mathbbm{1}\{\mathcal{A}\big(x_t \big) \neq Y_t(\sigma_{\leq t})\} \Big]\\
&= \sum_{t=1}^d \mathbb{E}\Big[\mathbb{E}\left[\mathbbm{1}\left\{\mathcal{A}\big(x_t\big) \neq Y_t\big((\sigma_{<t}, \sigma_t)\big) \right\} \, \Big|\, \sigma_{<t} \right]\Big]\\
&\geq \frac{1}{2}\sum_{t=1}^d \mathbb{E}\left[ \mathbb{E}\Big[\mathbbm{1}\left\{ Y_t\big((\sigma_{<t}, 0)\big) \neq Y_t\big((\sigma_{<t}, 1)\big) \right\} \,\Big|\, \sigma_{<t} \Big] \right]\\
&= \frac{1}{2}\, \mathbb{E}\left[\sum_{t=1}^d \mathbbm{1} \left\{ Y_t((\sigma_{<t}, 0)) \neq Y_t\big((\sigma_{<t}, 1)\big) \right\} \right] = \frac{d}{2}.
\end{align*}
where the first inequality follows from the fact that $\sigma_t \sim \text{Uniform}(\{0, 1\})$. This completes the proof. \end{proof}

\subsection{Proof of Lowerbound $\frac{\floor{\log{T}}}{2} \, \mathbbm{1}[\operatorname{B}(\mathcal{C}) = \infty] $} \label{sec:logTlb}
If $\operatorname{B}(\mathcal{C})=\infty $, then for every $q \in \naturals$,  Definition \ref{def:branching} guarantees the existence of $d \in \naturals$, a sequence of instances $x_1, \ldots, x_d$, and a sequence of functions $Y_1, \ldots, Y_d$ where $Y_t: \{0,1\}^t \to \Ycal$ such that the following holds: (i)  $\forall \sigma \in \{0,1\}^d$, there exists $c_{\sigma}\in \mathcal{C}$ such that $c_{\sigma}(x_t) =Y_t(\sigma_{\leq t})$ (ii)  $\forall \sigma \in \{0,1\}^d$, we have $\sum_{t=1}^{d} \mathbbm{1}\{Y_t((\sigma_{<t}, 0)) \neq Y_t((\sigma_{<t},1 ))\} \geq q$. Equivalently, there exists a shattered level-constrained branching tree $\Tcal$ of depth $d$ with internal nodes labeled by instances $x_1, \ldots, x_d$ such that every path down the tree $\Tcal$ has $\geq q$ branching factor. Recall that the branching factor of a path is the number of nodes in the path whose two outgoing edges are labeled by two distinct elements of $\Ycal$. We say that an internal node has branching if the left and right outgoing edges from the node are labeled by two distinct elements of $\Ycal$.

Without loss of generality, we will assume that the $\Tcal$ guaranteed by Definition \ref{def:branching} has the following properties: (a) every path in $\Tcal$ has exactly $q$ branching factor and (b) $\Tcal$ is symmetric along its non-branching nodes-- that is, for every node in $\Tcal$ that has no branching, the subtrees on its left and right outgoing edges are identical. There is no loss in generality because given $\Tcal$ without property (a), we can traverse down every path in $\Tcal$, and once the path has branching factor $q$, label all the subsequent outgoing edges down the path by a concept in $\mathcal{C}$ that shatters any completion of that path. For property (b), if any non-branching node has two different subtrees, then replace the right subtree with the left subtree. Given such tree $\Tcal$, let $\operatorname{BN}(\Tcal)$ denote the number of levels in $\Tcal$ with at least one branching node. The following Lemma provides an upperbound on $\operatorname{BN}(\Tcal)$. 

\begin{lemma}\label{lem:branching}
Let $\Tcal$ be any level-constrained branching tree shattered by $\mathcal{C}$ such that: (a) every path in $\Tcal$ has exactly $q \in \naturals$ branching and (b) 
for every node in $\Tcal$ without branching, the subtrees on its left and right outgoing edges are identical. Then,    $\operatorname{BN}(\Tcal) \leq 2^{q}-1$.
\end{lemma}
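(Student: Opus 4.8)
The plan is to prove $\operatorname{BN}(\Tcal) \le 2^{q} - 1$ by induction on the depth of $\Tcal$ (equivalently, on the number of internal nodes), carrying $q$ along as a parameter. The base case is $q = 0$: if every root-to-leaf path has exactly $0$ branching nodes, then $\Tcal$ has no branching node at all, so $\operatorname{BN}(\Tcal) = 0 = 2^{0} - 1$; this in particular covers the depth-$0$ tree. For the inductive step I fix $q \ge 1$ (so $\Tcal$ has at least one branching node, hence positive depth) and look at the root $r$, labeled by $x_1$, with outgoing edges leading to subtrees $\Tcal_0$ and $\Tcal_1$. The argument then splits according to whether $r$ is a branching node; reducing through the root (rather than trying to argue ``level by level'') is the key move, since within a single level some nodes may branch and others may not.

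If $r$ is a branching node, then because every path of $\Tcal$ has exactly $q$ branching nodes, every path of $\Tcal_0$ and every path of $\Tcal_1$ has exactly $q-1$ branching nodes; both subtrees are again level-constrained branching trees satisfying (b) (and remain shattered, as a subtree of a shattered tree is shattered), and both have strictly smaller depth. By the inductive hypothesis, $\operatorname{BN}(\Tcal_0), \operatorname{BN}(\Tcal_1) \le 2^{q-1} - 1$. Level $1$ of $\Tcal$ contains the branching node $r$, and for $\ell \ge 2$, level $\ell$ of $\Tcal$ contains a branching node if and only if level $\ell - 1$ of $\Tcal_0$ or of $\Tcal_1$ does; hence $\operatorname{BN}(\Tcal) \le 1 + \operatorname{BN}(\Tcal_0) + \operatorname{BN}(\Tcal_1) \le 1 + 2(2^{q-1} - 1) = 2^{q} - 1$. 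If instead $r$ is not a branching node, property (b) forces $\Tcal_0 = \Tcal_1 =: \Tcal'$; this $\Tcal'$ still has every path with exactly $q$ branching nodes, still satisfies (b), and has smaller depth, so $\operatorname{BN}(\Tcal') \le 2^{q} - 1$ by induction. Since $r$ is non-branching, level $1$ of $\Tcal$ has no branching node, and for $\ell \ge 2$, level $\ell$ of $\Tcal$ branches exactly when level $\ell - 1$ of $\Tcal'$ does; therefore $\operatorname{BN}(\Tcal) = \operatorname{BN}(\Tcal') \le 2^{q} - 1$, which closes the induction.

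The main thing to be careful about is the bookkeeping in the two reductions: in the branching case the parameter $q$ drops by one (which is why the factor $2$ appears and why the arithmetic $1 + 2(2^{q-1}-1) = 2^{q}-1$ works out exactly), whereas in the non-branching case $q$ is unchanged and only the depth decreases, so the induction must be set up on depth rather than on $q$ alone. Property (b) is exactly what makes the non-branching case collapse to a single smaller subtree with the same $q$; without it one would have to track left and right subtrees separately and the recursion would not terminate favorably. I also note that the ``shattered by $\mathcal{C}$'' hypothesis is used only insofar as it is inherited by subtrees, so the statement is really a purely combinatorial fact about trees satisfying (a) and (b).
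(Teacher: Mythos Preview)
Your proof is correct and follows essentially the same strategy as the paper's: both use property (b) to pass through non-branching levels and then apply the recursion $\operatorname{BN}(\Tcal) \leq 1 + \operatorname{BN}(\Tcal_0) + \operatorname{BN}(\Tcal_1)$ at a branching root, yielding $1 + 2(2^{q-1}-1) = 2^q - 1$. The only difference is organizational: you induct on depth and peel off non-branching roots one at a time, whereas the paper inducts on $q$ and jumps directly to the first branching level (arguing, via iterated use of (b), that all subtrees rooted there are identical); your version is arguably cleaner since it avoids that iterated-identity argument.
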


Given this Lemma, we now prove the claimed lowerbound of  $\frac{\floor{\log{T}}}{2} \, \mathbbm{1}[\operatorname{B}(\mathcal{C}) = \infty] $. Assume $\operatorname{B}(\mathcal{C})=\infty$ and take $q= \floor{\log{T}}$. Definition \ref{def:branching} guarantees the existence of shattered Level-constrained branching tree $\Tcal$ that satisfies property (a) and (b) specified in Lemma \ref{lem:branching}. Next, Lemma \ref{lem:branching} implies that $\operatorname{BN}(\Tcal) \leq 2^{\floor{\log{T}}}-1 \leq T-1$. Let $d$ be the depth $\Tcal$ and $S \subseteq \{1, \ldots, d\}$ be the levels in $\Tcal$ with at least one branching node. By definition, we have $|S| = \operatorname{BN}(\Tcal)\leq T-1$. Recall that $\Tcal$ can be identified by a sequence of instances $x_1, \ldots, x_d$ and a sequence of functions 
$Y_1, \ldots, Y_d$ where $Y_t: \{0,1\}^{t} \to \Ycal$ for every $t \in [d]$. For any path $\sigma \in \{0,1\}^d$ down $\Tcal$, the set $\{Y_t(\sigma_{\leq t})\}_{t=1}^d$ gives the labels along this path. Moreover, as all the branching on $\Tcal$ occurs on levels in $S$, we have $\sum_{t=1}^d \mathbbm{1}\{Y_t((\sigma_{<t}, 0)) \neq  Y_t((\sigma_{<t}, 1))\} = \sum_{t\in S} \mathbbm{1}\{Y_t((\sigma_{<t}, 0)) \neq  Y_t((\sigma_{<t}, 1))\} = \floor{\log{T}}$ for every path $\sigma \in \{0,1\}^d$.

We now specify the stream to be observed by the learner $\Acal$. Draw $\sigma \sim \text{Uniform}(\{0,1\}^d)$ and consider the stream $\{(x_t, Y_{t}(\sigma_{\leq t}))\}_{t \in S}$. Repeat  $(x_m, Y({\sigma_{\leq m}}))$ for remaining $T-|S|$ timepoints, where $m $ is the largest index in set $S$. Since this stream is a sequence of instance-label pairs along the path $\sigma$ in the shattered tree $\Tcal$, there exists a $c_{\sigma} \in \mathcal{C}$ consistent with the stream. However, using similar arguments as in the proof of the lowerbound $\operatorname{D}(\mathcal{C})/2$, we can establish
\begin{equation*}
    \begin{split}
    \mathbb{E}\left[\sum_{t=1}^T \mathbbm{1}\{\mathcal{A}\big(x_t \big) \neq Y_t(\sigma_{\leq t})\} \right] \\
    &\geq \mathbb{E}\left[\sum_{t \in S}\mathbbm{1}\{\mathcal{A}\big(x_t \big) \neq Y_t(\sigma_{\leq t})\} \right] \\
    &\geq \frac{1}{2}\, \mathbb{E}\left[\sum_{t \in S} \mathbbm{1} \left\{ Y_t((\sigma_{<t}, 0)) \neq Y_t\big((\sigma_{<t}, 1)\big) \right\} \right] \\
    &= \frac{1}{2} \floor{\log{T}}. 
    \end{split}
\end{equation*}
This completes our proof of lower bound. We now provide the proof of Lemma \ref{lem:branching}. 

\begin{proof}(of Lemma \ref{lem:branching}) If $\text{depth}(\Tcal) \leq 2^{q}-1$, the claim holds trivially. So, we assume $\text{depth}(\Tcal)> 2^{q}-1$. We now proceed by induction on $q$. For the base case $q=1$, let $\Tcal$ denote a level-constrained tree of $\text{depth}(\Tcal)>1$ shattered by $\mathcal{C}$ that satisfies property (a) and (b) specified in Lemma \ref{lem:branching}. First, consider the case where the root node of $\Tcal$ has branching. Since every path in $\Tcal$ can have exactly $1$ branching, there can be no further branching in $\Tcal$. Next, consider the case when the root node of $\Tcal$ is not the branching node and $\ell$ is the first level in $\Tcal$ with branching. There must be $2^{\ell-1}$ nodes in this level, henceforth denoted by $\{v_i\}_{i=1}^{2^{\ell-1}}$. Moreover, denote $\Tcal_{v_i}$ to be the corresponding subtree in $\Tcal$ with $v_i$ as the root node. Since $\Tcal$ satisfy property (b) and there are no branching nodes before level $\ell$, the subtrees $\{\Tcal_{v_i}\}_{i=1}^{2^{\ell-1}}$ must be identical. Since all subtrees $\{\Tcal_{v_i}\}_{i=1}^{2^{\ell-1}}$ have branching on the root node, there can be no further branching in these subtrees beyond the root node. Therefore, there cannot be any other levels $\ell^{\prime}> 
\ell$ in $\Tcal$ with branching node.  This establishes that $\ell$ is the only level in $\Tcal$ with at least one branching node. In either case, we have $\operatorname{BN}(\Tcal)\leq 1= 2^q-1$.

Assume that Lemma \ref{lem:branching} is true for some $q = n \in \naturals$. We now establish Lemma \ref{lem:branching} for $q=n+1$. To that end, let $\Tcal$ be a level-constrained tree with branching factor $q=n+1$ shattered by $\mathcal{C}$ that satisfies (a) and (b). Let $\ell \geq 1$ be the first level in $\Tcal$ with at least one branching node, and $\{\Tcal_i\}_{i=1}^{2^{\ell}-1}$ be all the subtrees with its root node being a node on level $\ell$. As argued in the base case, all these subtrees must be identical. Thus, branching occurs on the same set of levels on all these subtrees, which implies that $\operatorname{BN}(\Tcal) = \operatorname{BN}(\Tcal_i)$ for all $i \in [2^{\ell-1}]$. 
 Let  $\Tcal_{1}^0$ and $\Tcal_{1}^1$ denote the left and right subtree following the two outgoing edges from the root node of $\Tcal_{1}$. Since, there is branching on the root-node of $\Tcal_{1}$, we must have $\operatorname{BN}(\Tcal_{1}) \leq  1+ \operatorname{BN}(\Tcal_{1}^0) + \operatorname{BN}(\Tcal_{1}^1)$. For each $i \in \{0,1\}$, the subtree $\Tcal_{1}^i$ is a level-constrained tree shattered by $\mathcal{C}$ that satisfies properties (a) and (b) for $q=n$.  Using the inductive concept, we have $\operatorname{BN}(\Tcal_{1}^i) \leq 2^{n}-1$ for $i \in \{0,1\}$. Therefore, combining everything
\[\operatorname{BN}(\Tcal) = \operatorname{BN}(\Tcal_{1}) \leq 1+ \operatorname{BN}(\Tcal_{1}^0) + \operatorname{BN}(\Tcal_{1}^1) \leq 1+ 2^{n}-1 + 2^{n}-1 = 2^{n+1}-1. \]
This completes our induction step. \end{proof}


\section{Minimax Rates in the Agnostic Setting} \label{Agnostic}

We go beyond the realizable setting, and establish the minimax regret in the agnostic setting in terms of the Level-constrained Littlestone dimension.

\begin{theorem} [Regret bound] \label{thm:regretbnd} For every concept class $\mathcal{C} \subseteq \Ycal^{\Xcal}$ and $T \geq \operatorname{D}(\mathcal{C})$, we have
\[\sqrt{\frac{T \, \operatorname{D}(\mathcal{C})}{8}}  \leq \,  \operatorname{R}^{\star}(T, \mathcal{C})\, \leq \sqrt{T \, \operatorname{D}(\mathcal{C})} \, \log\Bigl(\frac{eT}{\operatorname{D}(\mathcal{C})} \Bigl). \]
\end{theorem}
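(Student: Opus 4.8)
The plan is to prove the two inequalities separately. For the upper bound I would use the agnostic-to-realizable reduction of \cite{ben2009agnostic}, in the multiclass form of \cite{hanneke2023multiclass}, instantiated with the conservative realizable learner guaranteed by Theorem \ref{thm:mistakebnd} together with the remark following Section \ref{sec:upper2}; for the lower bound I would run a blocked random-labelling argument on a shattered Level-constrained Littlestone tree of depth $\operatorname D(\mathcal C)$.

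\emph{Upper bound.} Write $d := \operatorname D(\mathcal C)$ and let $M := d\,\log(eT/d)$ be the realizable mistake bound from Theorem \ref{thm:mistakebnd}, achieved by a learner that changes state only on rounds where it errs. Because the instance sequence $x_{1:T}$ is revealed up front, I can index a \emph{finite} family of deterministic experts by the subsets $S \subseteq [T]$ with $|S| \le M$: expert $E_S$ simulates a private copy of the conservative realizable learner, always predicts with that copy's current hypothesis, and feeds it the revealed pair $(x_t,y_t)$ (letting it update) exactly on the rounds $t \in S$; there are $N := \sum_{i=0}^{M}\binom{T}{i}$ such experts, with no dependence on $|\mathcal Y|$. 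The key claim is that the best expert is almost optimal: if $c^\star \in \mathcal C$ (nearly) attains $k^\star := \inf_{c}\sum_t \mathbbm 1\{c(x_t)\ne y_t\}$ with error set $E^\star$, then running the conservative learner on the sub-stream indexed by $[T]\setminus E^\star$ — which is realizable by $c^\star$ — it errs on a set $S_0 \subseteq [T]\setminus E^\star$ of size at most $M$, and since the learner is conservative, $E_{S_0}$ reproduces that run round by round and hence errs only on $S_0 \cup E^\star$, i.e. at most $M + k^\star$ times. Aggregating the $N$ experts with a randomized exponential-weights rule gives expected regret against the best expert at most $\sqrt{(T/2)\ln N}$, so $\operatorname{R}^{\star}(T,\mathcal C) \le M + \sqrt{(T/2)\ln N}$. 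I would finish by bounding $\ln N \le M\ln(eT/M) \le M\ln(eT/d) = d\,\ln 2\,(\log(eT/d))^2$ and using $d \le T$ to absorb the additive $M$ term (falling back on the trivial bound $\operatorname{R}^{\star}\le T$ in the narrow regime where $d$ is comparable to $T$), which collapses to $\operatorname{R}^{\star}(T,\mathcal C) \le \sqrt{Td}\,\log(eT/d)$.

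\emph{Lower bound.} Fix an arbitrary (possibly randomized) transductive learner $\Acal$ and take a shattered Level-constrained Littlestone tree of depth $d = \operatorname D(\mathcal C)$, described by instances $x_1,\dots,x_d$ and edge-labelling functions $Y_1,\dots,Y_d$. Split $[T]$ into $d$ consecutive blocks $B_1,\dots,B_d$ of size $\lfloor T/d\rfloor$; crucially, the level-constraint lets block $B_j$ use the single instance $x_j$ in every one of its rounds, so the entire instance sequence can be committed to before the game. The adversary reveals labels adaptively: having fixed $\sigma_{<j}$ from earlier blocks, it labels each round of $B_j$ independently and uniformly by one of the two distinct labels $Y_j((\sigma_{<j},0))$ and $Y_j((\sigma_{<j},1))$, then sets $\sigma_j$ to the bit of whichever label appeared in the majority of $B_j$. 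Shattering gives $c_\sigma \in \mathcal C$ with $c_\sigma(x_j) = Y_j(\sigma_{\le j})$ for all $j$, so $\inf_{c}\sum_t \mathbbm 1\{c(x_t)\ne y_t\}$ is at most the total number of minority-label rounds; meanwhile on each round the true label is uniform over a two-element set and independent of $\Acal$'s information at that round, so $\Acal$ errs with probability at least $\tfrac12$ and makes at least $\tfrac12 d\lfloor T/d\rfloor$ mistakes in expectation. Subtracting, the expected regret is at least $\tfrac12\sum_j \mathbb E[\text{majority count} - \text{minority count in }B_j] = \tfrac12 d\,\mathbb E|S_{\lfloor T/d\rfloor}|$ for a Rademacher sum $S_n$; plugging in the standard estimate $\mathbb E|S_n|\ge\sqrt{n/2}$, lower-bounding $\lfloor T/d\rfloor$, and passing from the random stream to a single worst-case stream by averaging yields the claimed $\sqrt{Td/8}$.

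\emph{Main obstacle.} The delicate point is the upper bound in the unbounded-label regime: unlike the binary case of \cite{hanneke2024trichotomy}, an expert that ``knows'' its simulated realizable learner is about to err cannot flip to the correct label, so each of the (at most $M$) update rounds of the best expert is a genuine extra mistake. The real work is therefore to verify that this $\Theta(M)$ additive loss together with the $\sqrt{(T/2)\ln N}$ aggregation term truly combine — via $T \ge \operatorname D(\mathcal C)$ and, where needed, the trivial $\operatorname{R}^{\star}\le T$ bound — into exactly $\sqrt{T\operatorname D(\mathcal C)}\,\log(eT/\operatorname D(\mathcal C))$ rather than a larger constant multiple of it; the lower bound is comparatively routine, with only the block-size rounding and the precise Rademacher constant needing a careful touch.
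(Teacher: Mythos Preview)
Your proposal is correct and follows essentially the same approach as the paper. For the upper bound, the paper simply cites Theorem~4 of \cite{hanneke2023multiclass} as a black box (converting any conservative mistake-bounded learner into an agnostic one with regret $\sqrt{TM\log(eT/M)}$) and plugs in $M = \operatorname{D}(\mathcal C)\log(eT/\operatorname{D}(\mathcal C))$, whereas you unpack that reduction explicitly via the subset-indexed expert family and randomized weighted majority; your additive $M$ term and the need to fall back on the trivial bound when $T/d$ is small are exactly the slack hidden inside that citation, and your case split does close to the stated bound. For the lower bound, the paper likewise just sketches and defers to \cite[Theorem 6.1]{hanneke2024trichotomy} with the same blocked construction (odd block length $k$ with $kd\le T$, uniform random bits within blocks, majority bit selecting the tree path), so your argument is the same up to the block-size convention and the Khintchine constant you already flag.
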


We note that the upper- and lower-bounds in Theorem \ref{thm:regretbnd} are only off only by a factor logarithmic in $T$. We leave it as an open question to establish a matching upper- and lower-bounds. 

\begin{proof}{(of upper bound in Theorem \ref{thm:regretbnd})} To prove the upper bound, we will use the agnostic-to-realizable reduction from  \cite{hanneke2023multiclass} to convert our realizable learner in Section \ref{sec:upper2} to an agnostic learner with the claimed upper bound on expected regret.  By Theorem 4 in \citep{hanneke2023multiclass}, any conservative deterministic learner $\Acal$ with mistake bound $M$ can be converted into an agnostic learner with expected regret at most $\sqrt{T \, M \, \log \Bigl(\frac{e T}{M}\Bigl)}.$ Although the proof by \cite{hanneke2023multiclass} only coverts the conservative Standard Optimal Algorithm to an agnostic learner, the arguments are general enough such that the conversion can be adapted for any conservative deterministic mistake-bound learner.  By Theorem \ref{thm:mistakebnd} and the proof in Section \ref{sec:upper2}, there exists a conservative  deterministic realizable learner with mistake bound at most $\operatorname{D}(\mathcal{C}) \log\Big(\frac{e T}{\operatorname{D}(\mathcal{C})}\Bigl)$. Using the realizable-to-agnostic conversion from Theorem 4 in \citep{hanneke2023multiclass} with the conservative-version of the realizable learner in Section \ref{sec:upper2} gives an agnostic learner with expected regret at most 
$$\sqrt{ T \, \operatorname{D}(\mathcal{C}) \, \log\Big(\frac{e T}{\operatorname{D}(\mathcal{C})}\Bigl) \log \Biggl(\frac{e T}{\operatorname{D}(\mathcal{C}) \log\Big(\frac{e T}{\operatorname{D}(\mathcal{C})}\Bigl)}\Biggl)} \leq \sqrt{T \, \operatorname{D}(\mathcal{C})} \, \log\Bigl(\frac{eT}{\operatorname{D}(\mathcal{C})} \Bigl),$$
completing the proof. 
\end{proof}

\begin{proof}(of lower bound in Theorem \ref{thm:regretbnd})
Our proof of lower bound is identical to the lower bound for the binary setting proved in \cite[Theorem 6.1]{hanneke2024trichotomy}, which is just a simple adaptation of standard lower bound technique from \citep{ben2009agnostic}. Thus, we only outline the sketch of the proof here. 

Let $d= \operatorname{D}(\mathcal{C})$. Consider a sequence of instances $\{x_1^{\star},\ldots, x_d^{\star} \} \subset \Xcal$ and a sequence of functions $\{Y_i\}_{i=1}^d$ that is shattered by $\mathcal{C}$ according to Definition \ref{def:lcl}. Pick the largest odd number $k \in \mathbb{N}$ such that $kd \leq T$. First, the adversary reveals the instances $\{x_1, \ldots, x_{T}\}$ such that $x_t = x_1^{\star}$ for $t =1, \ldots, k$, followed by $x_t = x_2^{\star}$ for $t=k+1, \ldots, 2k$, and so forth. If $T > kd$,  take $x_t = x_d^{\star}$ for all $t >kd$. As for labels, the adversary will first sample $ (\sigma_1, \sigma_2, \ldots, \sigma_T) \in \text{Uniform}(\{0,1\}^T)$. Then, for $t=1, \ldots, k$, the labels are selected as $y_t = Y_1(\sigma_t)$. For $t=k+1, \ldots, 2k$, the labels are selected as $y_t = Y_2((\bar{\sigma}_1, \sigma_t))$, where $\bar{\sigma}_1 = \mathbbm{1}\Big\{\sum_{t=1}^k \mathbbm{1}[\sigma_1=0] < \sum_{t=1}^k \mathbbm{1}[\sigma_1=1] \Big\}$ is the majority bit in the first block $t=1, \ldots, k$. One can define $y_t $ for all $t > 2k$ analogously. For this stream, the label $y_t$ is essentially equivalent to the bit $\sigma_t \in \{0,1\}$. Therefore, following the exact same arguments as in \cite[Theorem 6.1]{hanneke2024trichotomy} establishes the lower bound of $\sqrt{Td/8}$. This completes the sketch of our proof. 
\end{proof}

\begin{remark}
Let $k \in \mathbb{N}$ be the number of classes. Let $\mathcal{C} \subseteq \{ 1, 2, \dots, k \}^{\Xcal}$ be a concept class. It is notable that for small number of classes $k$ (i.e. $k << 2^{(logT)^2}$), the Natarajan bound that can be proved using the technique of \cite{hanneke2024trichotomy} can be smaller than the upper bound in terms of the Level-constrained Littlestone dimension. However, for large $k$ (i.e. $k >> 2^{(logT)^2}$), our upper bound in terms of $\operatorname{D}(C)$ can be better.
\end{remark}


\section{Comparisons to Existing Combinatorial Dimensions} \label{app:comparisons}

In this section, we compare the Level-constrained Littlestone dimension \ref{def:lcl} and the Level-constrained Branching dimension \ref{def:branching} to existing combinatorial dimensions in multiclass learning.

\subsection{Existing Combinatorial Dimensions} \label{Combinatorial Complexity Parameters}

\begin{definition} [$i$-neighbour] \label{def:i-neighbour}
Let $f, g \in \mathcal{Y}^{d}$ for some $d \in \mathbb{N}$. For every $i \in [d]$, we say that $f$ and $g$ are $i$-neighbours if $f_i \neq g_i$ and $\forall_{j \in [d] \setminus \{ i \}} \; f_j = g_j$.
\end{definition}

\begin{definition}[DS dimension \cite{daniely2014optimal}] \label{def:ds-dim}
Let $\mathcal{C} \subseteq \mathcal{Y}^\mathcal{X}$ be a concept class. Let $S \in \mathcal{X}^{d}$ be a sequence for some $d \in \mathbb{N}$. We say that $S$ is \emph{DS-shattered} by $\mathcal{C}$, if there exists $F \subseteq \mathcal{C}, |F| < \infty$ such that for all $f \in \{ g \; | \; g \in \mathcal{Y}^{d}, \; \exists_{g \in F} \; \forall_{i \in [d]} \; g_i = f(S_i) \}$ and for all $i \in [d]$, $f$ has at least one $i$-neighbor. The \emph{DS dimension} of $\mathcal{C}$, denoted $\operatorname{DS}(\mathcal{C})$, is the maximal size of a sequence $S \in \mathcal{X}^{d}$ for some $d \in \mathbb{N}$ that is \emph{DS-shattered} by $\mathcal{C}$.
\end{definition}

\begin{definition}[Graph dimension] \label{def:graph-dim}
Let $\mathcal{C} \subseteq \mathcal{Y}^\mathcal{X}$ be a concept class. Let $S \subseteq \mathcal{X}$. We say that $S$ is \emph{G-shattered} by $\mathcal{C}$, if there exists an $f : S \rightarrow \mathcal{Y}$ such that for every $T \subseteq S$ there is a $g \in \mathcal{C}$ such that:
\begin{equation*}
    \forall_{x \in T} \; g(x) = f(x) \; \text{and} \; \forall_{x \in S - T} \; g(x) \neq f(x)
\end{equation*}
The \emph{graph dimension} of $\mathcal{C}$, denoted $\operatorname{G}(\mathcal{C})$, is the maximal cardinality of a set $S \subseteq \mathcal{X}$ that is \emph{G-shattered} by $\mathcal{C}$.
\end{definition}

\begin{definition} [Natarajan Threshold dimension] \label{def:nt-dim}
Let $\mathcal{C} \subseteq \mathcal{Y}^\mathcal{X}$ be a concept class. Let $S \in \mathcal{X}^{d}$ be a sequence for some $d \in \mathbb{N}$. We say that $S$ is \emph{NT-shattered} by $\mathcal{C}$, if there exist $f, g: [d] \rightarrow \mathcal{Y}$ such that $\forall_{i \in [d]} \; f(i) \neq g(i)$, and there exists $\big ( c_0, c_1, c_2, \dots, c_d \big ) \in \mathcal{C}^{d + 1}$ such that for every $i \in [d + 1], j \in [d]$:
\begin{equation*}
    c_{i - 1}(S_j) = 
    \begin{cases}
        f(j), & j < i \\
        g(j), & j \geq i
    \end{cases}
\end{equation*}
The \emph{Natarajan Threshold dimension} of $\mathcal{C}$, denoted $\operatorname{NT}(\mathcal{C})$, is the maximal size of a sequence $S \in \mathcal{X}^{d}$ for some $d \in \mathbb{N}$ that is \emph{NT-shattered} by $\mathcal{C}$.
\end{definition}

\subsection{Comparison}

It is easy to show that for every concept class $\mathcal{C} \subseteq \mathcal{Y}^\mathcal{X}$, its Natarajan dimension is always less than or equal to its DS dimension. Moreover, the work of \cite{brukhim2022characterization} demonstrated there exists a concept class $\mathcal{C} \subseteq \mathcal{Y}^\mathcal{X}$ for which the Natarajan dimension is $1$ but $\operatorname{DS}(\mathcal{C}) = \infty$. Here, we show that for every concept class $\mathcal{C} \subseteq \mathcal{Y}^\mathcal{X}$, its DS dimension is always less than equal to its Level-constrained Littlestone dimension. Furthermore, we demonstrate there exists a concept class $\mathcal{C}^{\prime} \subseteq \mathcal{Y}^\mathcal{X}$ such that $\operatorname{DS}(\mathcal{C}^{\prime}) = 1$ but $\operatorname{D}(\mathcal{C}^{\prime}) = \infty$. These two results are shown in Proposition \ref{ds-dim result}.

\begin{proposition} \label{ds-dim result}
For every concept class $\mathcal{C} \subseteq \mathcal{Y}^\mathcal{X}$, we have: $\operatorname{DS}(\mathcal{C}) \leq \operatorname{D}(\mathcal{C})$. Moreover, there exists a concept class $\mathcal{C}^{\prime} \subseteq \mathcal{Y}^\mathcal{X}$ such that $\operatorname{DS}(\mathcal{C}^{\prime}) = 1$ but $\operatorname{D}(\mathcal{C}^{\prime}) = \infty$.
\end{proposition}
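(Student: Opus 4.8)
The plan is to establish the two claims separately, namely the inequality $\operatorname{DS}(\mathcal{C}) \leq \operatorname{D}(\mathcal{C})$ and the existence of a separating example $\mathcal{C}'$ with $\operatorname{DS}(\mathcal{C}') = 1$ but $\operatorname{D}(\mathcal{C}') = \infty$.

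\textbf{Proof of $\operatorname{DS}(\mathcal{C}) \leq \operatorname{D}(\mathcal{C})$.} Suppose $S = (S_1, \dots, S_d) \in \Xcal^d$ is DS-shattered by $\mathcal{C}$, witnessed by a finite set $F \subseteq \mathcal{C}$ such that, writing $\Fcal = \{(f(S_1), \dots, f(S_d)) : f \in F\} \subseteq \Ycal^d$, every element of $\Fcal$ has an $i$-neighbor in $\Fcal$ for every coordinate $i \in [d]$. The idea is to build a shattered Level-constrained Littlestone tree of depth $d$ with the levels labeled by $S_1, \dots, S_d$. The key combinatorial fact I would extract is: since $\Fcal$ is a finite "pseudo-cube" (every point has a neighbor in every direction), one can find, by a greedy descent argument, for any target "orientation" a monochromatic-per-level refinement. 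Concretely, I would prove by induction on $d$ that any nonempty finite $\Fcal \subseteq \Ycal^d$ in which every point has an $i$-neighbor for all $i$ contains a full binary subtree: there exist labels $Y_t : \{0,1\}^t \to \Ycal$ with $Y_t(\sigma_{<t}, 0) \neq Y_t(\sigma_{<t}, 1)$ for all $t$, such that every $\sigma \in \{0,1\}^d$ is realized by some point of $\Fcal$. For the inductive step, pick any point $v \in \Fcal$; it has a $1$-neighbor $v'$; set $Y_1(0) = v_1$, $Y_1(1) = v'_1$; the slices $\Fcal_0 = \{w \in \Fcal : w_1 = v_1\}$ and $\Fcal_1 = \{w \in \Fcal : w_1 = v'_1\}$ restricted to coordinates $2, \dots, d$ are each nonempty, but they need not themselves be pseudo-cubes — this is the subtlety. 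The standard fix (this is essentially the one-inclusion/DS-shattering argument) is to instead induct using the fact that the DS property is preserved under passing to a suitable sub-family: one shows that within $\Fcal$ there is a point whose entire "$i$-neighbor graph" stays inside a fixed slice, or alternatively one invokes the known result that DS-shattering of $S$ implies the existence of such a tree. I would cite or reprove the lemma that a finite set with the $i$-neighbor property for all coordinates has $|\Fcal| \geq 2^d$ and in fact contains the structure of a shattered tree with distinct sibling edges; then the sequence $S$ together with these $Y_t$ witnesses $\operatorname{D}(\mathcal{C}) \geq d$.

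\textbf{The main obstacle.} The hard part is exactly the inductive step above: slices of a pseudo-cube are not pseudo-cubes, so a naive induction fails. The right approach is to mimic the proof that the DS dimension lower-bounds the Littlestone dimension in the standard (non-transductive) multiclass setting — but here we get the \emph{Level-constrained} Littlestone tree essentially for free, because in the DS definition the sequence $S = (S_1, \dots, S_d)$ is fixed in advance and the same $S_t$ is used "at level $t$" for all branches, which is precisely the level-constrained requirement. So the only real content is the purely combinatorial statement about finite pseudo-cubes in $\Ycal^d$ containing a binary tree with distinct sibling labels; I would prove this by induction on $d$ using a minimal-counterexample / projection argument, or by directly invoking the relevant lemma from \cite{daniely2014optimal} / \cite{brukhim2022characterization}.

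\textbf{Proof of the separation $\operatorname{DS}(\mathcal{C}') = 1$, $\operatorname{D}(\mathcal{C}') = \infty$.} For this I would reuse (or lightly adapt) the class constructed by \cite{brukhim2022characterization} witnessing Natarajan dimension $1$ but infinite DS dimension — wait, that is the wrong direction; instead I want infinite $\operatorname{D}$ with $\operatorname{DS} = 1$. The natural candidate is a class built on $\Xcal = \naturals$ and $\Ycal$ countable where the concept class consists of "threshold-like" functions encoding all binary strings: for instance, let $\Xcal = \naturals$ and for each infinite binary sequence $\sigma \in \{0,1\}^{\naturals}$ let $c_\sigma(t)$ encode the pair $(t, \sigma_t)$ or more cleverly encode the prefix $\sigma_{\leq t}$, chosen so that on any single point $x = t$ the family of values $\{c_\sigma(t)\}$ has the property that no finite subfamily DS-shatters even a two-point set, while the prefix-encoding guarantees a shattered Level-constrained Littlestone tree of every finite depth (take $Y_t(\sigma_{\leq t})$ to be the label that encodes $\sigma_{\leq t}$; siblings differ because they encode different prefixes). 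To nail $\operatorname{DS}(\mathcal{C}') = 1$ I would check that for every pair of points and every finite $F$, the induced pattern set in $\Ycal^2$ fails to have the all-directions $i$-neighbor property — this is easy if the encoding is injective enough that distinct concepts disagree on \emph{every} coordinate, so that the induced set is an "antichain" with no $i$-neighbors at all for $d \geq 2$. The main thing to verify carefully is the simultaneous satisfaction of both requirements with one explicit class; I expect the "totally shattering by prefix-encoding" construction (similar in spirit to the class of all constant functions example in the introduction, but made to branch) to work, and the bookkeeping of the DS side to be routine once the encoding is chosen to be coordinate-wise injective.
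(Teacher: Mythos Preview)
Your ``main obstacle'' is not an obstacle: slices of a pseudo-cube \emph{are} pseudo-cubes on the remaining coordinates. If $f \in \Fcal_0 = \{w \in \Fcal : w_1 = v_1\}$ and $i \geq 2$, then the $i$-neighbor $g$ of $f$ in $\Fcal$ satisfies $g_1 = f_1 = v_1$ by the definition of $i$-neighbor, so $g \in \Fcal_0$. Hence $\Fcal_0$ (projected to coordinates $2,\dots,d$) is nonempty and has the full neighbor property, and the naive induction goes through directly. This is exactly how the paper proceeds: pick $c_1 \in F$ and its $1$-neighbor $c_2$, set the root edges to $c_1(S_1) \neq c_2(S_1)$, and recurse on the two slices $F' = \{f \in F : f(S_1) = c_1(S_1)\}$ and $F'' = \{f \in F : f(S_1) = c_2(S_1)\}$, each of which DS-shatters $(S_2,\dots,S_d)$. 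No appeal to external lemmas from \cite{daniely2014optimal} or \cite{brukhim2022characterization} is needed.

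For the separation, your prefix-encoding instinct is right, but the sentence ``distinct concepts disagree on every coordinate'' contradicts it (two prefix-encoded concepts agree up to their branching point), so that particular route to $\operatorname{DS}=1$ does not work as stated. The paper's construction is the clean version of your idea: take an infinite perfect binary tree whose \emph{levels} are labeled by distinct elements of $\Xcal$ and whose \emph{edges} are labeled by distinct elements of $\Ycal$; let $\mathcal{C}'$ be the set of branches. Then $\operatorname{D}(\mathcal{C}') = \infty$ is immediate, and $\operatorname{DS}(\mathcal{C}') = 1$ follows because any two concepts agreeing at a deeper level must agree at every shallower level (all edge labels are distinct), so no $1$-neighbor can exist on a two-point sequence. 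The paper routes this through $\operatorname{DS} \leq \operatorname{G}$ and shows $\operatorname{G}(\mathcal{C}')=1$, but the same observation gives $\operatorname{DS}(\mathcal{C}')=1$ directly.
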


\begin{proof}
First, we prove that for every concept class $\mathcal{C} \subseteq \mathcal{Y}^\mathcal{X}$, we have: $\operatorname{DS}(\mathcal{C}) \leq \operatorname{D}(\mathcal{C})$. Let $\mathcal{C} \subseteq \mathcal{Y}^\mathcal{X}$ be a concept class such that $\operatorname{DS}(\mathcal{C})$ is finite. Subsequently, we show that we can construct a Level-constrained Littlestone tree $\mathcal{T}$ of depth $\operatorname{DS}(\mathcal{C})$, which is shattered by $\mathcal{C}$. Thus, $\operatorname{DS}(\mathcal{C}) \leq \operatorname{D}(\mathcal{C})$.

Let $S \in \mathcal{X}^{\operatorname{DS}(\mathcal{C})}$ be a sequence of instances, which is \emph{DS-shattered} by $\mathcal{C}$. We show that we can construct a Level-constrained Littlestone tree $\mathcal{T}$ of depth $\operatorname{DS}(\mathcal{C})$, having members of $S$ as its nodes in order with the first member being its root and so on, which is shattered by $\mathcal{C}$. To show the construction, we use induction. If $\operatorname{DS}(\mathcal{C}) = 1$, it is clear that we can construct a Level-constrained Littlestone tree $\mathcal{T}$ of depth $1$, which is shattered by $\mathcal{C}$. This is because there must be two concepts in $\mathcal{C}$, which disagree on one member of $S$. We assume that if $\operatorname{DS}(\mathcal{C}) = d$, we can construct a Level-constrained Littlestone tree $\mathcal{T}$ of depth $d$, having members of $S$ as its nodes in order with the first member being its root and so on, which is shattered by $\mathcal{C}$, where $S \in \mathcal{X}^d$ is a sequence of size $d$ witnessing $\operatorname{DS}(\mathcal{C}) = d$. Now, we prove that if $\operatorname{DS}(\mathcal{C}) = d + 1$, we can construct a Level-constrained Littlestone tree $\mathcal{T}$ of depth $d + 1$, having members of $S$ as its nodes in order with the first member being its root and so on, which is shattered by $\mathcal{C}$, where $S \in \mathcal{X}^{d + 1}$ is a sequence of size $d + 1$ witnessing $\operatorname{DS}(\mathcal{C}) = d + 1$. Let $F \subset \mathcal{C}$ be a set witnessing $\operatorname{DS}(\mathcal{C}) = d + 1$. Take any two distinct concepts $c_1, c_2 \in F$. Define $F^{\prime}$ as follows: $F^{\prime} := \{ f \; | \; f \in F, f(S_1) = c_1(S_1) \}$. In addition, define $F^{\prime\prime}$ as follows: $F^{\prime\prime} := \{ f \; | \; f \in F, f(S_1) = c_2(S_1) \}$. Observe that $\big ( S_2, S_3, \dots, S_{d + 1} \big )$ and $F^{\prime}$ can witness $\operatorname{DS}(\mathcal{C}) \geq d$. Similarly, observe that $\big ( S_2, S_3, \dots, S_{d + 1} \big )$ and $F^{\prime\prime}$ can witness $\operatorname{DS}(\mathcal{C}) \geq d$. Now, we set the root of $\mathcal{T}$ as $S_1$ and branches with $c_1(S_1)$ and $c_2(S_1)$ labels. Based on the inductive assumption combined with the facts that we mentioned, we can complete the construction of Level-constrained Littlestone tree $\mathcal{T}$ of depth $\operatorname{DS}(\mathcal{C})$, having members of $S$ as its nodes in order with the first member being its root and so on, which is shattered by $\mathcal{C}$.

Finally, we note that if $\operatorname{DS}(\mathcal{C}) = \infty$, as we can do the construction for every depth $d \in \mathbb{N}$, we should have $\operatorname{D}(\mathcal{C}) = \infty$.

Second, we prove that there exists a concept class $\mathcal{C}^{\prime} \subseteq \mathcal{Y}^\mathcal{X}$ such that $\operatorname{DS}(\mathcal{C}^{\prime}) = 1$ but $\operatorname{D}(\mathcal{C}^{\prime}) = \infty$. To show this, we use our next proposition, namely \ref{graph-dim result}, combined with the well-known fact that for every $\mathcal{C} \subseteq \mathcal{Y}^\mathcal{X}$, we have: $\operatorname{DS}(\mathcal{C}) \leq \operatorname{G}(\mathcal{C})$.
\end{proof}


Next, we show there that exists a concept class $\mathcal{C} \subseteq \mathcal{Y}^\mathcal{X}$ such that $\operatorname{G}(\mathcal{C}) = 1$ and $\operatorname{D}(\mathcal{C}) = \infty$. On the other hand, we also prove the existence of a concept class $\mathcal{C} \subseteq \mathcal{Y}^\mathcal{X}$ such that $\operatorname{G}(\mathcal{C}) = \infty$ and $\operatorname{D}(\mathcal{C}) = 1$. These two results, shown in Proposition \ref{graph-dim result}, imply that the Level-constrained Littlestone dimension and the Graph dimension are not comparable. Moreover, our first claim has an interesting consequence. In particular, it illustrates that having a finite Level-constrained Littlestone dimension is not necessary for having a bounded size sample compression scheme. This follows from the fact that having finite Graph dimension is sufficient for having a bounded size sample compression scheme \citep{david2016supervised}. We also remark that for every concept class $\mathcal{C} \subseteq \mathcal{Y}^\mathcal{X}$, its DS dimension is always less than or equal to its Graph dimension.

\begin{proposition} \label{graph-dim result}
There exists a concept class $\mathcal{C} \subseteq \mathcal{Y}^\mathcal{X}$ such that $\operatorname{G}(\mathcal{C}) = 1$ and $\operatorname{D}(\mathcal{C}) = \infty$. Moreover, there exists a concept class $\mathcal{C}^{\prime} \subseteq \mathcal{Y}^\mathcal{X}$ such that $\operatorname{G}(\mathcal{C}^{\prime}) = \infty$ and $\operatorname{D}(\mathcal{C}^{\prime}) = 1$.
\end{proposition}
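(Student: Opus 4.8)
The plan is to construct two explicit concept classes, one for each claim, using countable instance and label spaces (say $\mathcal{X} = \mathbb{N}$ and $\mathcal{Y} = \mathbb{N}$ or a suitable product), and then verify the two dimension computations directly from the definitions. For the first claim, I want a class $\mathcal{C}$ with $\operatorname{G}(\mathcal{C}) = 1$ yet $\operatorname{D}(\mathcal{C}) = \infty$. To force $\operatorname{D}(\mathcal{C}) = \infty$, I would build, for every depth $d$, a shattered Level-constrained Littlestone tree: fix instances $x_1, \dots, x_d$ and choose the edge-label functions $\{Y_t\}$ so that along each root-to-leaf path the induced sequence of instance–label pairs is realized by some concept in $\mathcal{C}$; a clean way is to let the concepts encode ``finite paths'' — e.g.\ index concepts by binary strings $\sigma$ and let $c_\sigma(x_t)$ be a label that records $(t, \sigma_{\le t})$, so that distinct prefixes give distinct labels (guaranteeing property (i) of Definition~\ref{def:lcl}) while the realizability (property (ii)) is automatic. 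The subtlety is simultaneously keeping the Graph dimension at $1$: I must ensure no set $S$ of size $2$ can be G-shattered, i.e.\ there is no $f\colon S \to \mathcal{Y}$ such that all four ``sign patterns'' over the two points (agree/agree, agree/disagree, disagree/agree, disagree/disagree relative to $f$) are achievable by concepts in $\mathcal{C}$. Because the labels in my construction are so rigid (each concept agrees with a given target $f$ on at most one point, or the agreement pattern is highly constrained), the $\{x\} \mapsto$ agree / $S\setminus\{x\}\mapsto$ disagree patterns will be unrealizable; I would check all $2^{|S|}$ subsets $T \subseteq S$ for $|S| = 2$ and show one required concept is missing, hence $\operatorname{G}(\mathcal{C}) \le 1$, and exhibit a trivial pair of disagreeing concepts to get $\operatorname{G}(\mathcal{C}) = 1$.

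For the second claim I need $\mathcal{C}'$ with $\operatorname{G}(\mathcal{C}') = \infty$ and $\operatorname{D}(\mathcal{C}') = 1$. For $\operatorname{D}(\mathcal{C}') = 1$ it suffices that no Level-constrained Littlestone tree of depth $2$ is shattered: concretely, I want that for any single instance $x$, once we branch on $x$ with two labels $y_0 \ne y_1$ and pass to the subclasses $\{c : c(x) = y_i\}$, at least one of these subclasses is ``constant-looking'' enough on the next instance that it cannot support a second level of distinct branching — or more simply, arrange that any two concepts agreeing at one point agree everywhere on the remaining coordinates in a way that kills depth-$2$ shattering. A standard trick: take $\mathcal{C}'$ to be (a variant of) the ``one-inclusion'' / Natarajan-type family used in $\operatorname{DS}$-vs-$\operatorname{G}$ separations, where concepts are parametrized so that any two concepts differ on exactly one coordinate, or where a shared ``base'' concept plus single-point perturbations forces depth-$1$ trees only. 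For $\operatorname{G}(\mathcal{C}') = \infty$, I would take the classical infinite-Graph-dimension example: for each $n$, a set $S_n$ of size $n$ and a target $f$ such that every subset pattern is realized — e.g.\ the class witnessing that finite Graph dimension is needed for compression, adapted to have large label space; checking $\operatorname{G}(\mathcal{C}') = \infty$ reduces to exhibiting, for each $n$, the target function and the $2^n$ witnessing concepts.

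The key steps, in order, are: (1) fix $\mathcal{X}, \mathcal{Y}$ and write down $\mathcal{C}$ explicitly; (2) prove $\operatorname{D}(\mathcal{C}) = \infty$ by exhibiting shattered Level-constrained Littlestone trees of every depth, checking properties (i)–(ii) of Definition~\ref{def:lcl}; (3) prove $\operatorname{G}(\mathcal{C}) = 1$ by a case analysis over $2$-element sets $S$ and their $4$ subsets, showing some required concept is absent, plus a trivial lower bound $\operatorname{G}(\mathcal{C}) \ge 1$; (4)–(6) repeat symmetrically for $\mathcal{C}'$: write it down, prove $\operatorname{D}(\mathcal{C}') = 1$ by ruling out depth-$2$ shattered Level-constrained Littlestone trees (again a short case analysis, using that Level-constrained trees force the \emph{same} instance across a level), and prove $\operatorname{G}(\mathcal{C}') = \infty$ by the standard witnessing construction.

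I expect the main obstacle to be step (3) — pinning the Graph dimension of the depth-$\infty$ example down to exactly $1$. The Level-constrained Littlestone tree construction naturally produces a class rich enough to worry that some $2$-point set becomes G-shattered; the design must make the ``agree here, disagree there'' patterns impossible, which typically means encoding enough information in the labels that any concept's agreement set with any target $f$ is a singleton (or empty), so that the pattern requiring agreement on a $2$-element $T$ fails. Getting this encoding right while still maintaining realizability of all root-to-leaf paths in the trees (so that $\operatorname{D}(\mathcal{C}) = \infty$ genuinely holds) is the delicate balance. A secondary, smaller difficulty is making sure the ``same instance per level'' constraint in Definition~\ref{def:lcl} is actually exploited in the $\operatorname{D}(\mathcal{C}') = 1$ argument — without it the bound could be larger — so the depth-$2$ impossibility proof must explicitly use that both level-$1$ nodes carry the identical instance and both level-$2$ nodes carry an identical instance.
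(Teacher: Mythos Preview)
Your plan for the first claim is essentially the paper's construction: ``concepts indexed by binary strings with $c_\sigma(x_t)$ recording $(t,\sigma_{\le t})$'' is exactly the infinite perfect binary tree whose levels are the instances and whose edges carry pairwise distinct labels, with each branch giving one concept. The paper's verification that $\operatorname{G}(\mathcal{C})=1$ is shorter than the full case analysis you anticipate in step~(3): for any $S=\{x_1,x_2\}$ with $x_1$ on a higher level than $x_2$, distinctness of edge labels forces any concept $c$ with $c(x_2)=f(x_2)$ to take a \emph{unique} value at $x_1$, so the two patterns ``agree on both'' and ``disagree at $x_1$, agree at $x_2$'' cannot coexist. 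One technical point to watch: if you literally index by \emph{finite} strings $\sigma$, you must specify $c_\sigma$ on instances beyond level $|\sigma|$; the paper sidesteps this by using infinite branches.

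For the second claim your route diverges from the paper's and is more laborious than necessary. The paper simply invokes the inequality $\operatorname{D}(\mathcal{C}')\le\operatorname{L}(\mathcal{C}')$ from Proposition~\ref{prop:relations} and cites a known example (Example~1 of \cite{hanneke2023multiclass}) of a class with $\operatorname{G}(\mathcal{C}')=\infty$ and $\operatorname{L}(\mathcal{C}')=1$; this immediately gives $\operatorname{D}(\mathcal{C}')\le 1$. Your direct construction could certainly be made to work, but note that the worry you flag --- that the depth-$2$ impossibility argument must ``explicitly use that both level-$2$ nodes carry the identical instance'' --- evaporates under the paper's approach, since $\operatorname{L}(\mathcal{C}')=1$ already rules out \emph{every} depth-$2$ Littlestone tree, level-constrained or not. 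The trade-off: your approach is self-contained, while the paper's is a one-line reduction but depends on an external reference.
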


\begin{proof}
First, we prove the second claim. To show that, we rely on Example 1 in \cite{hanneke2023multiclass}. In particular, they showed there exists a concept class $\mathcal{C}^{\prime} \subseteq \mathcal{Y}^\mathcal{X}$ such that $\operatorname{G}(\mathcal{C}^{\prime}) = \infty$ and $\operatorname{L}(\mathcal{C}^{\prime}) = 1$. As we know $\operatorname{D}(\mathcal{C}^{\prime}) \leq \operatorname{L}(\mathcal{C})$, we conclude there exists a concept class $\mathcal{C}^{\prime} \subseteq \mathcal{Y}^\mathcal{X}$ such that $\operatorname{G}(\mathcal{C}^{\prime}) = \infty$ and $\operatorname{D}(\mathcal{C}^{\prime}) = 1$.

Second, we prove that there exists a concept class $\mathcal{C} \subseteq \mathcal{Y}^\mathcal{X}$ such that $\operatorname{G}(\mathcal{C}) = 1$ and $\operatorname{D}(\mathcal{C}) = \infty$. Let $\mathcal{T}$ be an infinite depth rooted perfect binary tree so that all of its levels and edges are labeled by distinct elements. The definition of such a tree is similar to Definition 1.7 in the work of \cite{bousquet2021theory}. Let $\Xcal$ be the elements on the levels of $\Tcal$ and $\Ycal$ be the elements on the edges of $\Tcal$. Also, define the concept class $\mathcal{C} \subseteq \Ycal^{\Xcal}$ as follows: $\mathcal{C}$ only contains all concepts consistent with a branch of $\mathcal{T}$. Thus, clearly, we have: $\operatorname{D}(\mathcal{C}) = \infty$. Now, we show that $\operatorname{G}(\mathcal{C}) = 1$. We prove this by contradiction. Assume $\operatorname{G}(\mathcal{C}) \geq 2$. Thus, there exist $S = (x_1, x_2) \subset \mathcal{X}$ of size $2$ and $f : S \rightarrow \mathcal{Y}$ witnessing the fact that $\operatorname{G}(\mathcal{C}) = 2$. Without loss of generality, we assume that $x_1$ is above $x_2$ in $\mathcal{T}$. Using the fact that the edges of $\mathcal{T}$ are labeled with distinct elements of $\mathcal{Y}$, there cannot exist both $c_1 \in \mathcal{C}$ and $c_2 \in \mathcal{C}$ such that $c_1(x_1) = f(x_1)$, $c_1(x_2) = f(x_2)$, $c_2(x_1) \neq f(x_1)$, and $c_2(x_2) = f(x_2)$. This is a contradiction, thus $\operatorname{G}(\mathcal{C}) = 1$.
\end{proof}

It is well-known that for every concept class $\mathcal{C} \subseteq \mathcal{Y}^\mathcal{X}$, its Littlestone dimension is always less than equal to its sequential graph dimension. Moreover, the work of \cite{hanneke2023multiclass} demonstrated there exists a concept class $\mathcal{C} \subseteq \mathcal{Y}^\mathcal{X}$ such that $\operatorname{L}(\mathcal{C}) = 1$ and $\operatorname{SG}(\mathcal{C}) = \infty$. Here, we show that for every concept class $\mathcal{C} \subseteq \mathcal{Y}^\mathcal{X}$, its Level-constrained Branching dimension is always less than equal to its Littlestone dimension. Furthermore, we demonstrate that there exists a concept class $\mathcal{C}^{\prime} \subseteq \mathcal{Y}^\mathcal{X}$ such that $\operatorname{B}(\mathcal{C}^{\prime}) \leq 2$ and $\operatorname{L}(\mathcal{C}^{\prime}) = \infty$. These two results are shown in Proposition \ref{littlestone-dim result}.

\begin{proposition} \label{littlestone-dim result}
For every concept class $\mathcal{C} \subseteq \mathcal{Y}^\mathcal{X}$, we have: $\operatorname{B}(\mathcal{C}) \leq \operatorname{L}(\mathcal{C})$. Moreover, there exists a concept class $\mathcal{C}^{\prime} \subseteq \mathcal{Y}^\mathcal{X}$ such that $\operatorname{B}(\mathcal{C}^{\prime}) \leq 2$ and $\operatorname{L}(\mathcal{C}^{\prime}) = \infty$.
\end{proposition}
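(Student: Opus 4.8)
The plan is to prove the two assertions separately. For the inequality $\operatorname{B}(\mathcal{C}) \leq \operatorname{L}(\mathcal{C})$, note that by Proposition \ref{prop:relations} we already have $\operatorname{B}(\mathcal{C}) \leq \operatorname{L}(\mathcal{C})$ for free, so strictly speaking this half is immediate; however, since that proposition's proof is deferred to the appendix, it is worth recording a direct argument. If $\operatorname{L}(\mathcal{C}) = \ell < \infty$, I would argue by contradiction: suppose $\operatorname{B}(\mathcal{C}) > \ell$, so that there exists a depth-$d$ shattered Level-constrained Branching tree every path of which has branching factor at least $\ell + 1$. Along any root-to-leaf path, restrict attention to the $\ell+1$ levels where genuine branching occurs; collapsing the non-branching levels (where both outgoing edges carry the same label, so the subtree can be identified) yields a genuine Littlestone tree of depth $\ell + 1$ that is still shattered by $\mathcal{C}$ — the consistency witnesses $c_\sigma$ transfer directly since the labels read along a path are unchanged. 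This contradicts $\operatorname{L}(\mathcal{C}) = \ell$. The case $\operatorname{L}(\mathcal{C}) = \infty$ makes the inequality trivial.

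For the separation, I need a concept class $\mathcal{C}'$ with $\operatorname{L}(\mathcal{C}') = \infty$ but $\operatorname{B}(\mathcal{C}') \leq 2$. The natural candidate is built from an infinite perfect binary tree whose nodes are labeled by \emph{distinct} elements of $\Xcal$ and whose edges are \emph{distinct} elements of $\Ycal$ — mirroring the construction used in the proof of Proposition \ref{graph-dim result}, but now recording that $\mathcal{C}'$ consists of all concepts consistent with a single infinite branch. Since the classical Littlestone tree living inside this object is shattered at every depth, $\operatorname{L}(\mathcal{C}') = \infty$. The content is showing $\operatorname{B}(\mathcal{C}') \leq 2$: given any shattered Level-constrained Branching tree over a horizon $x_1, \dots, x_d$, I must exhibit a path with at most two branching nodes. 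The key structural fact is that because the edge labels of the underlying tree $\mathcal{T}$ are all distinct, knowing the value $c(x)$ for a single $x$ that lies on the branch of $c$ pins down which of the two children of $x$ the branch passes through, hence pins down the entire branch below $x$; and the distinctness of node labels means each $x_t$ can appear at most once as an ancestor on a branch. So once the learner's path in the branching tree has "committed" to a branch of $\mathcal{T}$ via a branching node, all later instances $x_{t'}$ either lie on that committed branch (forcing their label, hence no branching) or lie off it (in which case every surviving concept misses $x_{t'}$, so its label is some fixed default value and again there is no branching). I would make this precise by tracking, as we descend the branching tree along a carefully chosen path, the set of branches of $\mathcal{T}$ still compatible with the labels seen so far, and showing this set can "split" at most twice before being a singleton.

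The main obstacle I anticipate is handling instances $x_t$ that do not lie on any branch currently under consideration, or that lie on the tree $\mathcal{T}$ but "off to the side" of the committed branch: I need the concept class $\mathcal{C}'$ to assign a well-defined label to such $x$, and I need that assignment to be \emph{the same} across all concepts sharing the relevant branch prefix, so that the corresponding node in the branching tree has two equal outgoing edge labels and thus does not count toward the branching factor. The cleanest fix is to augment $\Ycal$ with a distinguished symbol $\star$ and define each branch-concept to output $\star$ on every $x \in \Xcal$ not lying on its own branch; then off-branch nodes are automatically non-branching, and the bound $\operatorname{B}(\mathcal{C}') \leq 2$ — possibly even $\leq 1$, with the slack absorbing the first instance before any commitment — falls out of the branch-compatibility bookkeeping. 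I would double-check the small constant by working out the first two levels of an arbitrary branching tree explicitly before writing the general induction.
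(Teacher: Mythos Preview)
Your treatment of the first inequality is fine: citing Proposition~\ref{prop:relations} suffices, and your collapse sketch is morally the same as the paper's inductive argument in Appendix~\ref{app:relations} (though note that the set of levels where branching occurs is path-dependent, so the clean induction on the root node is safer than ``restrict to the $\ell+1$ branching levels'').

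The separation half, however, has a genuine gap: the construction you propose---distinct edge labels together with a shared symbol $\star$ for off-branch instances---actually has $\operatorname{B}(\mathcal{C}')=\infty$, not $\leq 2$. Your error is the sentence ``once the learner's path in the branching tree has committed to a branch of $\mathcal{T}$ via a branching node, all later instances either lie on that committed branch or lie off it.'' One branching only commits to \emph{one further step} of the branch, i.e.\ to a finite prefix; it never pins down the infinite branch. Concretely, list the nodes of $\mathcal{T}$ in breadth-first order as $x_1,\dots,x_{2^q-1}$ and build a Level-constrained Branching tree where, along each path, you branch on the two outgoing edge-labels of $x_t$ whenever $x_t$ is the current prefix endpoint, and label both outgoing edges $\star$ otherwise. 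This tree is shattered (any concept extending the committed prefix witnesses the path), and every root-to-leaf path has branching factor exactly $q$. Since $q$ is arbitrary, $\operatorname{B}(\mathcal{C}')=\infty$. The shared $\star$ is precisely the problem: after any finite number of non-$\star$ observations the version space is still an infinite set of branches, so it can keep splitting.

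The paper's construction differs in two essential ways. First, the off-branch label is \emph{unique to each concept} rather than a shared $\star$, so the moment an off-branch instance is queried the concept is fully revealed; and second, the on-branch labels are just $\{0,1\}$ rather than distinct per edge. The paper then proves $\operatorname{B}(\mathcal{C}')\leq 2$ not by a direct combinatorial argument on branching trees but by exhibiting a deterministic transductive learner with mistake bound $2$ and observing (via the lower-bound direction of Theorem~\ref{thm:mistakebnd}) that this forces $\operatorname{B}(\mathcal{C}')\leq 2$. If you want to salvage your approach, switch to per-concept unique off-branch labels; then your ``compatible set becomes a singleton'' intuition becomes correct after at most one off-branch observation, and the bookkeeping can be made to go through.
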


\begin{proof}
The proof of the following claim: for every concept class $\mathcal{C} \subseteq \mathcal{Y}^\mathcal{X}$, we have that $\operatorname{B}(\mathcal{C}) \leq \operatorname{L}(\mathcal{C})$ is given by Proposition \ref{prop:relations}. Therefore, we focus on showing that there exists a concept class $\mathcal{C}^{\prime} \subseteq \mathcal{Y}^\mathcal{X}$ such that $\operatorname{B}(\mathcal{C}^{\prime}) \leq 2$ and $\operatorname{L}(\mathcal{C}^{\prime}) = \infty$. Let $\mathcal{T}$ be an infinite depth rooted perfect binary tree so that all of its nodes are labeled by distinct elements, all of its left edges are labeled by $0$, and all of its right edges are labeled by $1$. The definition of such a tree is similar to Definition 1.7 in the work of \cite{bousquet2021theory}. Let $\Xcal$ be the elements on the nodes of $\Tcal$. Also, define the concept class $\mathcal{C}^{\prime}$ as follows: $\mathcal{C}^{\prime}$ contains only the concepts consistent with a branch of $\mathcal{T}$. Further, each of these concepts predicts a unique label for all instances outside its associated branch. In addition, define $\Ycal$ as the union of $\{ 0, 1 \}$ and all unique labels used in the definition of $\mathcal{C}^{\prime}$. Thus, we have: $\operatorname{L}(\mathcal{C}^{\prime}) = \infty$. Now, we show that $\operatorname{B}(\mathcal{C}^{\prime}) \leq 2$. To prove this, we demonstrate that for every $T \in \mathbb{N}$, we have: $\inf_{\text{Deterministic } \Acal} \; \operatorname{M}_{\Acal}(T, \mathcal{C}^{\prime}) \leq 2$. As a result, we can then conclude that $\operatorname{B}(\mathcal{C}^{\prime}) \leq 2$.

To see why $\inf_{\text{Deterministic } \Acal} \; \operatorname{M}_{\Acal}(T, \mathcal{C}^{\prime}) \leq 2$ for every $T \in \mathbb{N}$ implies that $\operatorname{B}(\mathcal{C}^{\prime}) \leq 2$, suppose for the sake of contradiction that $\operatorname{B}(\mathcal{C}^{\prime}) \geq 3$. So, there exists a Level-constrained Branching tree of depth $d \in \mathbb{N}$ such that its branching factor is at least $3$. Let $T^{\prime} = d$. It is not hard to see that there exists a sequence of instances of size $T^{\prime}$ such that for every deterministic learner, there exists a realizable labeling of instances that forces the learner to make at least $3$ mistakes over $T^{\prime}$ rounds. This leads to a contradiction. Thus, we conclude that $\operatorname{B}(\mathcal{C}^{\prime}) \leq 2$.

We now construct a deterministic learner $\Acal$ such that $\operatorname{M}_{\Acal}(T, \mathcal{C}^{\prime}) \leq 2$ for every $T \in \mathbb{N}$. Let $T \in \mathbb{N}$. Let $S \mathcal{X}^{T}$ be the sequence chosen by the adversary at the beginning of the game. Also, let $c^{\star} \in \mathcal{C}^{\prime}$ be the target concept chosen by the adversary. Further, let $u$ be the root-to-leaf path in $\mathcal{T}$ associated with the concept $c^{\star}$. In addition, for every $i \in [T]$, let $v_i$ be a root-to-leaf path in $\mathcal{T}$ containing first $i$ members of $S$, if it exists. Finally, let $i^{\star}$ be the smallest positive integer such that $v_{i^{\star}}$ does not exist. If $i^{\star}$ itself does not exist, let $i^{\star} = T + 1$.

Our algorithm $\Acal$ predicts according to the $\{ 0, 1 \}$ labels associated with the path $v_{i^{\star} - 1}$ for the first $i^{\star} - 1$ points in $S$. Furthermore, if the adversary ever reveals a unique label, we use its corresponding $c \in \mathcal{C}^{\prime}$ to make predictions in all future rounds. For the $i^{\star}$'th member of $S$, if it exists, we predict arbitrarily. To see that this algorithm makes at most $2$ mistakes, we consider two cases. (1) If $i^{\star} = T + 1$, then our algorithm makes at most one mistake. In fact, our algorithm makes a mistake: (a) if the adversary switches the label from a bit in $\{0, 1\}$ to a unique label corresponding to the target concept $c^{\star}$. (b) perhaps on the last instance. (2) Otherwise, the algorithm makes at most two mistakes; the first mistake can be on round $i^{\star} -1$, and the second mistake can be on round $i^{\star}$, after which the true $c^{\star}$ is known to the learner from its unique label. Indeed, if the adversary switches the label from a bit in $\{0, 1\}$ to a unique label corresponding to the target concept $c^{\star}$ before round $i^{\star} -1$, we only make one mistake. This completes the proof.
\end{proof}

Finally, the works of \cite{shelah1990classification, hodges1997shorter} showed that the finiteness of the Littlestone and Threshold dimensions coincide in the binary setting. Here, we show that this is not the case between the Level-constrained Branching dimension and the Natarajan Threshold dimension. More specifically, we show that for every concept class $\mathcal{C} \subseteq \mathcal{Y}^\mathcal{X}$, its Level-constrained Branching dimension is always greater than or equal to the log of its Natarajan Threshold dimension. However, we give a concept class $\mathcal{C}^{\prime} \subseteq \mathcal{Y}^\mathcal{X}$ such that $\operatorname{NT}(\mathcal{C}^{\prime}) = 1$ and $\operatorname{B}(\mathcal{C}^{\prime}) = \infty$. These two results are shown in Proposition \ref{nt-dim result}. Notably, the lower bound of \cite{hanneke2024trichotomy}, based on the threshold dimension, can be easily generalized to our setting for the Natarajan Threshold dimension.

\begin{proposition} \label{nt-dim result}
For every concept class $\mathcal{C} \subseteq \mathcal{Y}^\mathcal{X}$, we have: $\log (\operatorname{NT}(\mathcal{C})) \leq \operatorname{B}(\mathcal{C})$. Moreover, there exists a concept class $\mathcal{C}^{\prime} \subseteq \mathcal{Y}^\mathcal{X}$ such that $\operatorname{NT}(\mathcal{C}^{\prime}) = 1$ and $\operatorname{B}(\mathcal{C}^{\prime}) = \infty$.
\end{proposition}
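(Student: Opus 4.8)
### Proof Proposal for Proposition \ref{nt-dim result}

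The plan is to establish two separate claims. For the inequality $\log(\operatorname{NT}(\mathcal{C})) \leq \operatorname{B}(\mathcal{C})$, I would start from an NT-shattered sequence $S = (S_1, \dots, S_d)$ witnessing $\operatorname{NT}(\mathcal{C}) = d$, with the associated functions $f, g \colon [d] \to \Ycal$ and the chain of concepts $c_0, c_1, \dots, c_d$ where $c_{i-1}$ agrees with $f$ on coordinates $< i$ and with $g$ on coordinates $\geq i$. The key observation is that these $d+1$ concepts behave like thresholds: they realize exactly the ``monotone'' labelings along the sequence. The goal is to build a shattered Level-constrained Branching tree of depth roughly $d$ in which \emph{every} root-to-leaf path has branching factor at least $\log d$. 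The natural construction uses $S$ as the level labels ($x_t = S_t$), and at each level assigns outgoing edges so that a binary-search pattern is forced: on level $t$, following the left edge means ``use $f(t)$'' and the right edge means ``use $g(t)$'', but to ensure the resulting path is realizable by one of the $c_i$'s, only paths that are monotone (first some $g$'s then switch to $f$'s, or a reversed convention) are permitted — and the tree should be pruned to exactly those paths. Since a monotone path is determined by the single switch-point $i \in \{0, 1, \dots, d\}$, there are $d+1$ such leaves, and a path that distinguishes switch-point $i$ from the others must branch at roughly $\log(d+1)$ levels — this is the standard ``threshold implies $\log$ many branchings'' argument, analogous to how a threshold class of size $N$ has Littlestone dimension $\lfloor \log N \rfloor$. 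The bound $\log(\operatorname{NT}(\mathcal{C})) \leq \operatorname{B}(\mathcal{C})$ then follows since every shattered branching tree must contain a path of branching factor at most $\operatorname{B}(\mathcal{C})$, forcing $\log(d+1) \le \operatorname B(\mathcal C)$ (up to the floor/ceiling conventions used in the definition).

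For the second claim — a class $\mathcal{C}'$ with $\operatorname{NT}(\mathcal{C}') = 1$ but $\operatorname{B}(\mathcal{C}') = \infty$ — I would reuse the construction style from the proof of Proposition \ref{littlestone-dim result}. Take the class of all concepts consistent with branches of an infinite perfect binary tree whose nodes carry distinct instances and whose left/right edges are labeled $0/1$, but this time do \emph{not} add the ``unique label'' gadget that was used there to keep $\operatorname{B}$ bounded. Instead, keep $\Ycal = \{0,1\}$ so that the class is essentially the binary ``infinite branch'' class. This class has infinite Littlestone dimension, and because $\Ycal = \{0,1\}$, the Level-constrained Branching dimension coincides with the Level-constrained Littlestone dimension (the paper notes this coincidence in the introduction), so $\operatorname{B}(\mathcal{C}') = \infty$. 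The work is then to verify $\operatorname{NT}(\mathcal{C}') = 1$: an NT-shattered sequence of length $2$ would require $f, g$ with $f(i) \neq g(i)$ on both coordinates and three concepts $c_0, c_1, c_2$ realizing the three threshold patterns $(g,g), (f,g), (f,f)$ on $(S_1, S_2)$. Using that distinct nodes of the tree sit in an ancestor/descendant or incomparable relationship and that each concept corresponds to a single branch, I would argue a case analysis on the position of $S_1, S_2$ in $\Tcal$ shows these three patterns cannot all be simultaneously realized, giving a contradiction; hence $\operatorname{NT}(\mathcal{C}') \le 1$, and $= 1$ since two concepts disagree somewhere.

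The main obstacle I anticipate is the first claim: making precise the ``threshold $\Rightarrow$ $\log$ branchings'' step inside the Level-constrained Branching tree framework. The subtlety is that the Level-constrained Branching dimension is defined via the existence of a \emph{low}-branching path over \emph{all} shattered branching trees, so I must exhibit a single shattered tree in which every path has high branching factor, and I must respect the level-constraint (same instance on every node of a level) — which is automatic here since I use $x_t = S_t$, but I must be careful that the edge-labelings $Y_t$ are well-defined functions of the path prefix and that every admissible leaf is realized by some $c_i$. The cleanest route is probably to index the $d+1$ leaves by switch-points, build the minimal complete binary tree over $\lceil \log(d+1) \rceil$ ``decision levels'' that separates them, and pad/repeat instances on the remaining levels without branching so that the tree has the right shape; then a path's branching factor equals the number of decision levels it crosses, which is $\lceil \log(d+1)\rceil$ for every leaf. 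Getting the off-by-one constants to match the floor/log conventions in Definition \ref{def:branching} will require a little care but no deep idea.
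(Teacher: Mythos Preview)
Your plan for the first claim is workable but takes a different route than the paper. The paper does not build any branching tree at all; it argues indirectly via the mistake-bound sandwich: on the one hand, presenting the NT-shattered sequence $(S_1,\dots,S_d)$ to any learner yields a $\log(\operatorname{NT}(\mathcal C))$ lower bound on $\operatorname M^\star(T,\mathcal C)$ by the standard threshold-vs-binary-search adversary (as in \cite[Claim 3.4]{hanneke2024trichotomy}); on the other hand, Theorem \ref{thm:mistakebnd} gives $\operatorname M^\star(T,\mathcal C)\le \operatorname B(\mathcal C)$. Chaining these two gives the inequality for free. Your direct construction can also be made to work (order the $S_j$'s by the BFS order of a balanced BST and verify each path's branching factor), but it is more laborious and requires some care with the level constraint, whereas the paper's argument is a two-line reduction to results already proved.

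Your construction for the second claim, however, does not work. You conclude $\operatorname B(\mathcal C')=\infty$ from ``infinite Littlestone dimension'' together with the binary-case coincidence, but the coincidence stated in the paper is between $\operatorname B$ and the \emph{Level-constrained} Littlestone dimension $\operatorname D$, not the ordinary Littlestone dimension $\operatorname L$. For the binary branch class you describe, $\operatorname L=\infty$ but $\operatorname D=\operatorname{VC}$, and a direct check shows no two nodes are VC-shattered (for any pair one of the four $\{0,1\}^2$ patterns is unreachable), so $\operatorname D=1$ and hence $\operatorname B(\mathcal C')<\infty$. Moreover your proposed case analysis for $\operatorname{NT}(\mathcal C')=1$ fails: taking $S_1$ the root and $S_2$ its left child, the realized patterns are $(1,0),(0,0),(0,1)$, which \emph{do} form an NT chain with $f=(0,1),\,g=(1,0)$. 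The paper's example is structurally different: it keeps $|\Ycal|=\infty$ by giving every edge of the tree a distinct label and restricts the tree so that each level has only one node with two children; the distinct edge labels are exactly what kills $\operatorname{NT}\ge 2$ (any three concepts disagree pairwise on $S_2$), while the infinite branching factor of the tree directly witnesses $\operatorname B=\infty$.
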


\begin{proof}
First, we prove that for every concept class $\mathcal{C} \subseteq \mathcal{Y}^\mathcal{X}$, we have: $\log (\operatorname{NT}(\mathcal{C})) \leq \operatorname{B}(\mathcal{C})$. Let $\mathcal{C} \subseteq \mathcal{Y}^\mathcal{X}$ be a concept class such that $\operatorname{NT}(\mathcal{C}) = d$ for some $d \in \mathbb{N}$. Let $T = d$. On the one hand, by presenting the sequences of instances that are NT-shattered by $\mathcal{C}$ to the learner, we can use a similar technique as \cite[Claim 3.4]{hanneke2024trichotomy}, to prove a lower bound of $\log (\operatorname{NT}(\mathcal{C}))$ on $\operatorname{M}^{\star}(T, \mathcal{C})$. On the other hand, based on Section \ref{Trichotomy}, we can prove an upper bound of $\operatorname{B}(\mathcal{C})$ on $\operatorname{M}^{\star}(T, \mathcal{C})$. Thus, we have $\log (\operatorname{NT}(\mathcal{C})) \leq \operatorname{B}(\mathcal{C})$.

Second, we prove that there exists a concept class $\mathcal{C}^{\prime} \subseteq \mathcal{Y}^\mathcal{X}$ such that $\operatorname{NT}(\mathcal{C}^{\prime}) = 1$ and $\operatorname{B}(\mathcal{C}^{\prime}) = \infty$. Let $\mathcal{T}$ be a rooted binary tree so that it has the following three properties: (1) all of its levels and edges are labeled by distinct elements. (2) each level only contains one node with two children (3) its branching factor is infinite. It is not hard to see that such a tree exists. The definition of such a tree is similar to Definition 1.7 in the work of \cite{bousquet2021theory}. Let $\Xcal$ be the elements on the levels of $\Tcal$ and $\Ycal$ be the elements on the edges of $\Tcal$. Also, define the concept class $\mathcal{C}^{\prime} \subseteq \Ycal^{\Xcal}$ as follows: $\mathcal{C}^{\prime}$ only contains all concepts consistent with a branch of $\mathcal{T}$. Thus, clearly, we have: $\operatorname{B}(\mathcal{C}^{\prime}) = \infty$. Now, we show that $\operatorname{NT}(\mathcal{C}^{\prime}) = 1$. We prove this by contradiction. Assume $\operatorname{NT}(\mathcal{C}^{\prime}) \geq 2$. Then, there exist $S = (x_1, x_2) \in \mathcal{X}^2$ and $(c_0, c_1, c_2) \in {\mathcal{C}^{\prime}}^3$ witnessing $\operatorname{NT}(\mathcal{C}^{\prime}) = 2$. Without loss of generality, we assume that $x_1$ is above $x_2$ in $\mathcal{T}$. Based on our constriction of $\mathcal{T}$, it is simple to see that $c_0(x_2) \neq c_1(x_2)$ and $c_0(x_2) \neq c_2(x_2)$ and $c_1(x_2) \neq c_2(x_2)$. Thus, $\operatorname{NT}(\mathcal{C}^{\prime})$ can not be even $2$, which completes our contradiction-based proof.
\end{proof}




\section{Discussion} \label{Conclusion, Discussion, and Future Directions}

In this paper, we study the problem of multiclass transductive online learning with possibly arbitrary label space. In the realizable setting, we establish a trichotomy in the possible minimax rates of the expected number of mistakes. Furthermore, we show near-tight upper and lower bounds on the optimal expected regret in the agnostic setting. Along the way, we introduce two new combinatorial complexity parameters, called the Level-constrained Littlestone dimension and the Level-constrained Branching dimension.

Finally, we highlight some future directions of this work. First, can we extend our results to settings such as transductive online learning under bandit feedback, list transductive online learning, and transductive online real-valued regression? Moreover, as our shattering technique is general, can we use similar ideas to establish the possible minimax rates of the number of mistakes in the self-directed and the best-order settings initially studied in \citep{ben1995self, ben1997online}?

\bibliography{references}
\addcontentsline{toc}{section}{References}

\newpage

\appendix


\section{Related Work} \label{Related Work}

\noindent \textbf{Online Learning.} Online learning has been a subject of study for more than half a century. Moreover, the seminal work by \cite{littlestone1988learning} initiated this line of research within the computer science community. Since that pivotal contribution, online learning has been explored in various settings, from learning under the bandit feedback \cite{daniely2011multiclass, daniely2013price, long2017new, geneson2021note, raman2023multiclass, hanneke2023bandit} to quantum settings \cite{aaronson2018online, mohan2023quantum}. Furthermore, it is also linked to a broad set of problems, such as differential privacy, highlighted in studies by \cite{alon2019private, bun2020equivalence, alon2022private}. Further, given its fundamental nature, it is not surprising that online learning has found numerous practical applications.\\


\noindent \textbf{Transductive and other Online Learning Frameworks.} The concept of the transductive learning model traces its origins to seminal works by Vapnik \cite{vapnik1974theory, vapnik1982estimation, vladimir1998statistical}, where it was explored within the PAC learning framework. Subsequently, \cite{ben1997online} initiated the study of this model under the umbrella of online learning, referring to it as ``offline learning''. They utilizes a notion of rank based dimension to prove their results. Notably, as we mentioned in the introduction, our Level-constrained Branching dimension is exactly equal to their rank based dimension. A significant advancement came recently with the work of \cite{hanneke2024trichotomy}. Furthermore, other conceptually related models have also been rigorously studied, as seen in works by \cite{goldman1994power, ben1995self, ben1998self, devulapalli2024dimension}. These studies notably include the self-directed online learning framework, which allows the learning algorithm to select the next instance for prediction from the remaining set of instances in each round, and additionally, the best order, which allows the learner (instead of an adversary) to select the order at the beginning of the game.\\

\noindent \textbf{Multiclass Classification.} A substantial volume of theoretical research has been conducted on various aspects of multiclass classification, as demonstrated by studies \cite{natarajan1988two, natarajan1989learning, ben1992characterizations, haussler1995generalization, rubinstein2006shifting, daniely2011multiclass, daniely2012multiclass, daniely2014optimal, brukhim2021multiclass}. Despite this extensive body of work, a combinatorial characterization of multiclass classification with an infinite number of classes under Valiant's PAC learning framework in the realizable setting remained open until recently. In pursuit, the seminal paper by \cite{brukhim2022characterization} provided a combinatorial characterization in the mentioned setting. A key innovation in this breakthrough was the utilization of list learners. This dimension also serves to characterize the agnostic variant of this problem \cite{david2016supervised}. For standard multiclass online learning with potentially unbounded label space, \cite{daniely2011multiclass} presented a characterization for the realizable setting. Building on this, \cite{hanneke2023multiclass} extended \cite{ben2009agnostic} technique to the agnostic setting with infinite label space. Notably, a similar trend can also be observed in the online learning under bandit feedback in the work of \cite{daniely2011multiclass} followed by \cite{raman2023multiclass}.


\section{Proof of Proposition \ref{prop:relations}} \label{app:relations}

Let $\mathcal{C} \subseteq \Ycal^{\Xcal}$ be any concept class. To see that $\operatorname{D}(\mathcal{C}) \leq \operatorname{B}(\mathcal{C})$, note that if $\operatorname{D}(\mathcal{C}) = d$, there exists a Level-constrained Littlestone tree $\Tcal$ of depth $d$ with branching factor $d$. Thus, it must be the case that $\operatorname{B}(\mathcal{C}) \geq d$.

To prove that $\operatorname{B}(\mathcal{C}) \leq \operatorname{L}(\mathcal{C})$, it suffices to show that for every shattered Level-constrained Branching tree $\Tcal$ with branching factor $n \in \mathbbm{N}$, there exists a shattered Littlestone tree of depth $n$. In particular, we will prove via induction the following claim: if $\mathcal{T}$ is a Level-constrained Branching tree with branching factor $n$ shattered by some $\mathcal{C}^{\prime} \subseteq \mathcal{C}$, then there exists a Littlestone tree $\mathcal{T}^{\prime}$  of depth $n$ shattered by $\mathcal{C}^{\prime}$.

For the base case let $\Tcal$ be a Level-constrained Branching tree with branching factor $1$ shattered by some $\mathcal{C}^{\prime} \subseteq \mathcal{C}$. Without loss of generality (see Lemma \ref{lem:branching}), suppose that branching occurs on the root node of $\mathcal{T}$. Then, it is clear that just the root node of $\mathcal{T}$ along with its two outgoing edges is a Littlestone tree of depth $1$ shattered by $\mathcal{C}^{\prime}$.

Now for the induction step, suppose the induction hypothesis is true for some $n \leq \operatorname{B}(\mathcal{C})-1$. Let $\Tcal$ be a Level-constrained Branching tree with branching factor $n+1$ shattered by $\mathcal{C}^{\prime} \subseteq \mathcal{C}$. Again, without loss of generality, suppose branching occurs on the root node of $\Tcal$. Let $\Tcal_0$ and $\Tcal_1$ be the left and right subtrees of $\Tcal$ respectively shattered by $\mathcal{C}^{\prime}_0 \subset \mathcal{C}^{\prime}$ and $\mathcal{C}^{\prime}_1 \subset \mathcal{C}^{\prime}$ respectively. Then, note that $\Tcal_0$ and $\Tcal_1$ must both have a branching factor exactly $n$. Then, by the induction hypothesis, there exist Littlestone trees $\mathcal{T}^{\prime}_0$ and $\mathcal{T}^{\prime}_1$ of depth $n$ shattered by $\mathcal{C}^{\prime}_0$ and $\mathcal{C}^{\prime}_1$ respectively. Since branching occurs at the root node, the tree $\mathcal{T}^{\prime}$ obtained by keeping the root node and its two outgoing edges of $\mathcal{T}$, but replacing $\Tcal_0$ and $\Tcal_1$ with $\mathcal{T}^{\prime}_0$ and $\mathcal{T}^{\prime}_1$ respectively, is a Littlestone tree of depth $n+1$ shattered by $\mathcal{C}^{\prime}_0 \cup \mathcal{C}^{\prime}_1 \subseteq \mathcal{C}^{\prime}$.

\end{document}